\crefname{equation}{}{}
\Crefname{equation}{}{}
\crefname{definition}{\textbf{definition}}{definitions}
\Crefname{definition}{Definition}{Definitions}
\crefname{assumption}{\textbf{assumption}}{assumptions}
\Crefname{assumption}{Assumption}{Assumptions}
\definecolor{maroon}{RGB}{192,80,77}
\newtheorem{theorem}{Theorem}
\newtheorem{lemma}[theorem]{Lemma}
\newtheorem{corollary}[theorem]{Corollary}
\newtheorem{definition}[theorem]{Definition}
\newtheorem{assumption}{Assumption}
\newcommand{\argmax}{\mathop{\mathrm{argmax}}}
\def\E{\mathbb{E}}
\def\P{\mathbb{P}}
\def\R{\mathbb{R}}
\def\cE{\mathcal{E}}
\def\cF{\mathcal{F}}
\def\cN{\mathcal{N}}
\setlist{leftmargin=10mm}
\def\ind{\mathds{1}}
\def\vr{\mathbf{r}}
\newcolumntype{L}{>{\centering\arraybackslash}m{3cm}}
\title{Towards Agnostic Feature-based Dynamic Pricing: Linear Policies vs Linear Valuation with Unknown Noise}
\author{Jianyu Xu }
\author{Yu-Xiang Wang }
\affil{Department of Computer Science\\
	University of California, Santa Barbara\\
\texttt{\{xu\_jy15, yuxiangw\}@ucsb.edu} }
\begin{document}
	
\maketitle

\begin{abstract}

In feature-based dynamic pricing, a seller sets appropriate prices for a sequence of products (described by feature vectors) on the fly by learning from the binary outcomes of previous sales sessions (``Sold'' if valuation $\geq$ price, and ``Not Sold'' otherwise). Existing works either assume \emph{noiseless} linear valuation or \emph{precisely-known} noise distribution, which limits the applicability of those algorithms in practice when these assumptions are hard to verify. In this work, we study two more agnostic models: (a) a ``linear policy'' problem where we aim at competing with the best linear pricing policy while making no assumptions on the data, and (b) a ``linear noisy valuation'' problem where the random valuation is linear plus an unknown and assumption-free noise. For the former model, we show a $\tilde{\Theta}(d^{\frac13}T^{\frac23})$ minimax regret up to logarithmic factors. For the latter model, we present an algorithm that achieves an $\tilde{O}(T^{\frac34})$ regret, and improve the best-known lower bound from $\Omega(T^{\frac35})$ to $\tilde{\Omega}(T^{\frac23})$. These results demonstrate that no-regret learning is possible for feature-based dynamic pricing under weak assumptions, but also reveal a disappointing fact that the seemingly richer \emph{pricing feedback} is not significantly more useful than the \emph{bandit-feedback} in regret reduction.

	
\end{abstract}
\section{INTRODUCTION}
\label{sec_intro}
	In a dynamic pricing process, a seller presents prices for the products and adjusts these prices according to customers' feedback (i.e., whether they decide to buy or not) to maximize the revenue. Existing works on the single-product pricing problem \citep{kleinberg2003value, wang2021multimodal} assume that customers make decisions only according to the comparisons between prices and their own (random) valuations, and the goal is to find out a best fixed price that maximizes the (expected) revenue. In general, the single-product pricing problem has been well studied under a variety of assumptions.

However, these methods are not applicable when there are thousands of highly differentiated products with no experience in selling them. This motivates the idea of ``contextual pricing'' \citep{cohen2020feature_journal, mao2018contextual, javanmard2019dynamic, liu2021optimal}, where each sale session is described by a context that also affects the valuation and pricing.

\fbox{\parbox{0.98\textwidth}{Contextual pricing. For $t=1,2,...,T:$
		\small
		\noindent
		\begin{enumerate}[leftmargin=*,align=left]
			\setlength{\itemsep}{0pt}
			\item A context $x_t\in\R^{d}$ is revealed that describes a sales session (product, customer and context).			
			\item The customer valuates the product as $y_t$ using $x_t$.
			\item The seller proposes a price $v_t>0$ concurrently (according to $x_t$ and historical sales records).
			\item The transaction is successful if $v_t\leq y_t$, i.e., the seller gets a reward $r_t = v_t\cdot\ind(v_t\leq y_t)$. 
		\end{enumerate}
	}
}

Here $T$ is the time horizon known to the seller in advance\footnote{Here we assume $T$ known for simplicity of notations. In fact, if $T$ is unknown, then we may apply a ``doubling epoch'' trick as \citet{javanmard2019dynamic} and the regret bounds are the same.}, $x_t$'s can be either stochastic (i.e., each $x_t$ is independently and identically distributed) or adversarial (i.e., the sequence $\{x_t\}_{t=1}^T$ are arbitrarily chosen and fixed by nature before $t=0$), and $\ind_t:=\ind(v_t\leq y_t)$ is an indicator that equals $1$ if $v_t\leq y_t$ and $0$ otherwise.  
In this work, we consider two distinct problem setups that make use of the feature vector $x_t$.

\begin{enumerate}[label=(\alph*)]
	\item Linear Policy (\textbf{LP}): $(x_t,y_t)$'s are selected by nature (or an oblivious adversary) arbitrarily, and the learning goal is to compete with 
	the optimal linear prices $v_t^*=x_t^{\top}\beta^*$ where $\beta^*$ maximizes the cumulative reward in the hindsight. 
	\item Linear Valuation (\textbf{LV}): assume \emph{valuations} are linear $+$ noise, i.e., $y_t = x_t^{\top}\theta^*+N_t$, where $\theta^*\in\R^d$ is a fixed vector and $N_t$ is a market noise,
	drawn i.i.d. from a fixed \emph{unknown} distribution $\mathbb{D}$. The learning goal is to compete with the \emph{globally} optimal price $v_t^*=\argmax_v v\cdot \Pr[v\leq y_t|x_t]$ with no restrictions on the pricing policy.
\end{enumerate}

These two problem setups --- although quite similar at a glance --- are intrinsically different. The LP problem makes no assumptions on the $x_t\rightarrow y_t$ mapping, i.e., agnostic learning. Customers' valuations are not necessarily linear (and can be deterministic/noisy/stochastic/adversarial), but the seller competes with the optimal policy in a constrained family. In contrast, the LV problem makes mild modeling assumptions about the distribution of $y_t$ given $x_t$ while keeping the policy class unrestricted. In other words, LP is modeling our strategy while LV is modeling the nature. We adopt \emph{regret} as a metric of algorithmic performance: For the LP problem, we compare its (expected) reward with that of the optimal fixed $\beta^*$ in hindsight (i.e., an \emph{ex post} regret); For the LV problem, we compare its (expected) reward with the largest expected reward condition on $\theta^*$ and $\mathbb{D}$ (i.e., an \emph{ex ante} regret). We will clarify the difference between LP and LV in Appendix \ref{appendix_lp_vs_lv} with more details and examples. We emphasize that in both settings, the distributions of the valuation are unknown and non-parametric, and we are interested in designing  no-regret algorithms and characterizing the complexity.


\begin{table*}[]
	\centering
	\caption{Summary of existing regret bounds and our results}
	\label{table_related_works}
	\resizebox{\textwidth}{!}{
		\begin{tabular}{|l|L|L|L|L|L|}
			\hline
			Problem          & \multicolumn{4}{c|}{\textbf{Linear Valuation (LV)}}                                                                                                                                                                                                                                                                                  & \multirow{2}{3cm}{\centering \textbf{Linear Policy (LP)}}          \\ \cline{1-5}
			Noise Assumption & No Noise                                                                            & Known, Log-concave                                                                & Parametric                                                              & \textbf{Agnostic, Bounded}                                       &                                                                    \\ \hline
			Upper Bound      & $O(d\log\log{T})$ \quad \citep{leme2018contextual}   & $O(d\log{T})$ \quad\quad\quad  \citep{xu2021logarithmic}        & $\tilde{O}(d\sqrt{T})$\quad \quad\quad  \citep{wang2021dynamic}& $\tilde{O}(T^{\frac34}+d^{\frac12}T^{\frac58})$   \textbf{ This Work}                & $\tilde{O}(d^{\frac13}T^{\frac23}) $   \quad \quad\quad  \textbf{ This Work}      \\ \hline
			Lower Bound      & $\Omega(d\log\log{T})$ \citep{kleinberg2003value}& $\Omega(d\log{T})$ \citep{javanmard2019dynamic}& $\Omega(d\sqrt{T})$ \quad \quad\quad \quad  \citep{ban2021personalized}     & {$\tilde{\Omega}(T^{\frac23})$  \quad \quad \quad \quad \citep{kleinberg2003value} and \textbf{ This Work}          } & {$\tilde{\Omega}(d^{\frac13}T^{\frac23})$ \quad \quad\quad   \textbf{ This Work}   }         \\ \hline
		\end{tabular}
	}
\end{table*}

\ifbool{compact}{\noindent\textbf{Summary of Results.}}{\paragraph{Summary of Results.}} Our contributions are threefold.
\begin{enumerate}
	\item For the LP problem with adversarial $x_t$'s, we present an algorithm ``Linear-EXP4'' that achieves $\tilde{O}(d^{\frac13}T^{\frac23})$ regret.
	\item For the LV problem with adversarial $x_t$'s, we present an algorithm ``D2-EXP4'' that achieves $\tilde{O}(T^{\frac34}+d^{\frac12}T^{\frac58})$. 
	\item We present an $\tilde{\Omega}(d^{\frac13}T^{\frac23})$ regret lower bound for LP problem and an $\tilde{\Omega}(T^{\frac23})$ for LV problem (even with stochastic $x_t$'s, known $\theta^*$ and Lipschitz valuation distribution).  The results indicate ``Linear-EXP4'' optimal up to logarithmic factors. 
\end{enumerate}
To the best of our knowledge,  we are the first to study the LP problem and the version of the LV problem with no assumption on the noise. Comparing to the existing literature on this problem \citep{cohen2020feature_journal,javanmard2019dynamic}, our model makes fewer assumptions. Our results for LP is information-theoretically optimal, and our results in LV improve over the best known upper and lower bounds (from $\tilde{O}(T^{\frac23\vee(1-\alpha)})$  on i.i.d. $x_t$'s with an indeterministic $\alpha$ and $\Omega(T^{\frac35})$ in \cite{luo2021distribution}).  

\ifbool{compact}{\noindent\textbf{Technical Novelty.}}{\paragraph{Technical Novelty.}} In this work, we make use of the \emph{half-Lipschitz} nature in pricing problems: the probability of a price to be accepted will not decrease as the price decreases. This has been used in \citet{kleinberg2003value} and \citet{cohen2020feature_journal}. However, they directly applied this property in discretizing the action and policy spaces, which would lead to a linear regret in our LV problem setting. In our algorithm D2-EXP4, we settle this issue by also discretizing the noise distribution space and include these discretized CDF's as part of policy candidates. We also carefully adopt a conservative ``markdown''\footnote{A price markdown is defined as a reduction on the selling price.} on the discretized output price to ensure a large-enough probability of acceptance. In this way, we get rid of all assumption on the noise distribution (even the basic Lipschitzness assumed by \citet{luo2021distribution}) while achieving a sub-linear regret. This discretization method, along with the price markdown, can be easily transferred to any pricing problem settings with unknown i.i.d. noise. For the lower bound proof, we adapt the nested intervals and bump functions introduced by \citet{kleinberg2004nearly} for continuum bandits to our pricing problem models, and extend the $\Omega(T^{\frac23})$ regret lower bound on non-continuous demand functions \citep{kleinberg2003value} to Lipschitz ones.

\section{RELATED WORKS}
\label{sec_related_works}
In this section, we discuss how our work relates to the existing literature on (contextual/non-contextual) pricing, bandits, and contextual search. 
	
\paragraph{Non-Contextual Dynamic Pricing.}
Dynamic pricing was extensively studied under the single-product (non-contextual) setting \citep{kleinberg2003value, besbes2009dynamic, besbes2012blind, wang2014close, besbes2015surprising, chen2019nonparametric, wang2021multimodal}. The crux of pricing is to learn the demand curve (i.e., the noise distribution  in our LP problem) from Boolean-censored feedback. \citet{wang2021multimodal} concludes existing results and characterizes the impact of different assumptions on the demand curve on the minimax regret. The problem of contextual dynamic pricing is more challenging mainly because we need to learn the valuation parameter $\theta^*$ and the noise distribution jointly. Knowing one would imply a learning algorithm for another \citep{javanmard2019dynamic, luo2021distribution}, but learning both together makes the problem highly nontrivial.

	
\paragraph{Contextual Dynamic Pricing.}
	There is a growing body of recent works focusing on the LV model of the contextual dynamic pricing problem \citep{cohen2020feature_journal, javanmard2019dynamic, xu2021logarithmic, luo2021distribution, fan2021policy}, but most of them make strong assumptions about the noise. Table \ref{table_related_works} lists the best existing results under these assumptions. Besides these works, \citet{cohen2020feature_journal} also achieved an $O(d \log T)$ regret when the variance of the Sub-Gaussian noise is extremely small, i.e., $\tilde{O}(1/T)$. It is worth mentioning that our ``Linear-EXP4'' shares the same discretization factor with ``ShallowPricing'' algorithm in \citet{cohen2020feature_journal}, but ours solves a different problem. The closest works to ours are the recent \citet{luo2021distribution} and \citet{fan2021policy} that study the LV problem under only smoothness and log-concavity assumptions. In \citet{luo2021distribution}, they develop a UCB-style algorithm that achieves $\tilde{O}(T^{\frac23\vee(1-\alpha)})$ regret for noises with $2^{\text{nd}}$-order smooth and log-concave CDF's, assuming the existence of a good-enough estimator that might approach $\theta^*$ with $O(T^{-\alpha})$ error only with the logged data. However, such an estimator was neither described nor trivial to construct with $\alpha >0$. In \citet{fan2021policy}, they present a two-phase algorithm, with an exploration phase followed by an exploitation phase, and achieves $\tilde{O}((Td)^{\frac{2m+1}{4m-1}})$ regret for noises with $m^{\text{th}}$-order smooth ($m\geq2$) and ``well-behaved''\footnote{A property defined similarly as log-concavity.} CDF's. In comparison, our ``D2-EXP4'' algorithm achieves an $\tilde{O}(T^{\frac34})$ regret with no distributional assumptions such as Lipschitzness or smoothness.

\paragraph{Bandits}
	A multi-armed bandit (MAB) is an online learning model where one can only observe the feedback of the selected action at each time. Both LP and LV can be reduced to contextual bandits \citep{langford2007epoch, agarwal2014taming} as long as the policies and prices are finite. In this work, we make use of an ``EXP-4'' algorithm \citep{auer2002nonstochastic} in a new way: By carefully discretizing the parameter space and distribution functions, we enable EXP-4 agents to find out near-optimal policies among infinite continuum policy spaces. There exists another family of bandit problem: continuum-armed bandit (CAB) \citep{agrawal1995continuum, kleinberg2004nearly, auer2007improved}, where the action space is continuum and the reward function is Lipschitz. In this work, we adapt the (bump functions, nested intervals) structures in \citet{kleinberg2004nearly} to our lower bound proof. This adaptation is non-trivial since (1) their reward functions is not suitable for pricing problems, and (2) their feedback is not Boolean-censored. 
	
	
	Our results on the LP problem reveal that a reduction to contextual bandits is ``tight'' in regret bounds. A similar situation also occurs in \citet{kleinberg2003value} on non-contextual pricing. These results indicate a pricing feedback is not substantially richer than a bandit feedback in information theory, which is surprising as a pricing feedback indicates the potential feedback of a ``halfspace'' rather than a single point. However, does this imply we cannot get any extra information from a pricing feedback? Notice that we are matching a no-Lipschitz upper bound with a Lipschitz lower bound! In fact, a revenue curve is naturally ``half Lipschitz'', which helps us get rid of this assumption. We will discuss this property in Paragraph \ref{paragraph_conservative}. 
	
	
	\paragraph{Contextual search}
	Contextual pricing is cohesively related to contextual search problems \citep{leme2018contextual, lobel2018multidimensional, liu2021optimal, krishnamurthy2020contextual} where they also learn from Boolean feedback and usually assume linear contexts. However, they are facing slightly different settings: \citet{leme2018contextual, lobel2018multidimensional} are noiseless and could achieve an optimal $O(\log\log{T})$ regret; \citet{liu2021optimal} allows noises directly on customers' decisions instead of the valuations in our setting; \citet{krishnamurthy2020contextual} allows only small-variance valuation noises that is similar to \citet{cohen2020feature_journal}.

\section{PRELIMINARIES}
\label{sec_preliminaries}
\paragraph{Symbols and Notations.} Now we introduce the mathematical symbols and notations involved in the following pages. The game consists of $T$ rounds. $x_t, \beta^*, \theta^*\in\R^d_+, y_t, N_t\in\R, v_t\in\R_+$\footnote{We do not assume $y_t\geq0$ since some customer would not buy anything despite the price.}, where $d\in\mathbb{Z}_+$. At each round, we receive a payoff (\emph{reward}) $r_t=v_t\cdot\ind_t$ where $\ind_t:=\ind(v_t\leq y_t)$ indicates the acceptance of $v_t$, i.e., $\ind_t=1$ if $v_t\leq y_t$ and 0 otherwise. For LP problem, we denote $F_{LP}(v|x)$ as a \emph{demand function}, i.e. the probability of price $v$ being accepted given feature $x$. Therefore, $F_{LP}(v|x)$ is non-increasing with respect to $v$, for any $x\in\R^d$. For LV problem, we specifically denote $u_t=x_t^{\top}\theta^*$ as the \emph{noiseless valuation} (or \emph{expected valuation} for zero-mean noises), and denote $F$ as its CDF. Finally, we define $h(v,x) = v\cdot F_{LP}(v|x)$ as an \emph{expected revenue} function of price $v$ given feature $x$ in an LP problem, and $g(v, u, F):=v\cdot(1-F(v-u))$ as an \emph{expected revenue} function of price $v$ given any noiseless valuation $u$ and noise distribution $F$ in an LV problem.

We may use discretization methods in the following sections. Here we adopt the notation in \citet{cohen2020feature_journal} by denoting
\begin{equation}
\lfloor{x}\rfloor_{\gamma}:=\lfloor\frac{x}{\gamma}\rfloor\cdot\gamma, \lceil{x}\rceil_{\gamma}:=\lceil\frac{x}{\gamma}\rceil\cdot\gamma.
\label{equ_discretization}
\end{equation}
as the \emph{$\gamma$-lower/upper rounding} of $x$, which discretize $x$ as its nearest smaller/larger integer multiples of $\gamma$. Similarly, for $\theta\in\R^d$, we may define $\lfloor{\theta}\rfloor_{\gamma}:=[\lfloor{\theta_1}\rfloor_{\gamma}, \lfloor{\theta_2}\rfloor_{\gamma}, \ldots, \lfloor{\theta_d}\rfloor_{\gamma}]^{\top}$ and $\lceil{\theta}\rceil_{\gamma}:=[\lceil{\theta_1}\rceil_{\gamma}, \lceil{\theta_2}\rceil_{\gamma}, \ldots, \lceil{\theta_d}\rceil_{\gamma}]^{\top}$. Based on this, we define a counting set $N_{\gamma, a}:=\left\{0,1,2,\ldots, \lfloor\frac{a}{\gamma}\rfloor\right\}$. 

\paragraph{Regret Definitions.} Next we define the regrets in both problems.
\begin{definition}[Regret in LP]
	We define $Reg_{LP}$ as the regret of the Linear Policy pricing problem.
	\begin{equation}
	Reg_{LP}:=\max_{\beta}\sum_{t=1}^{T} h(x_t^{\top}\beta, x_t) - h(v_t, x_t).
	\label{equ_def_reg_lp}
	\end{equation}
	\label{def_reg_lp}
\end{definition}
\begin{definition}[Regret in LV]
	We define $Reg_{LV}$ as the regret of the Linear Noisy Valuation problem.
	\begin{equation}
	Reg_{LV}:=\sum_{t=1}^{T}\max_v g(v, u_t, F) - g(v_t, u_t, F).
	\label{equ_def_reg_lv}
	\end{equation}
	\label{def_reg_lv}
\end{definition}
Again, we aim at competing with the best fixed $\beta^* = \argmax_{\beta}\sum_{t=1}^{T} h(x_t^{\top}\beta, x_t)$ in an LP problem, and with the global best pricing policy (maximizing expected revenue at every $t$) in an LV problem.

\paragraph{Summary of Assumptions} We specify the problems by the following assumptions:
\begin{assumption}[bounded features and parameters]
	Without losing generality, we assume that $x_t, \beta^*, \theta^*\in\R^d_+,\|x_t\|_2\leq B, \|\beta^*\|_2\leq1, \|\theta^*\|_2\leq1$, where $B\in\mathbb{Z}^+$ is a constant known to us in advance.
	\label{assump_bounded_features_parameters}
\end{assumption}
\begin{assumption}[decreasing demand in LP]
	In LP problem, assume that $F_{LP}(v|x)$ is non-increasing for any $v\geq0, x\in\R^{d}_+$.
	\label{assump_decrease_demand_LP}
\end{assumption}
\begin{assumption}[bounded noise]
	In LV problem, assume that $N_t\in[-1,1]$ that is i.i.d. sampled from a fixed unknown distribution $\mathbb{D}$.
	\label{assump_bounded_noise}
\end{assumption}
These assumptions are mild and common for algorithm design.  Based on these assumptions above, we only have to consider prices in $[0, B]$ for LP problems and $[0, B+1]$ for LV problems. Besides, we assume that $T\geq d^4$ for a simplicity of comparing among different terms in regret bounds. In Section \ref{subsec_lower_bound}, we will introduce more assumptions to the distribution functions to demonstrate that our lower bounds hold \emph{even if} those assumptions are made.



\section{ALGORITHM}
\label{sec_algorithm}
In this section, we propose two algorithms, Linear-EXP4 and D2-EXP4, for LP and LV problems respectively. Both of them are based on the EXP-4 algorithm \citep{auer2002nonstochastic} along with discretized policy sets. First of all, we define these policy sets:

\begin{definition}[parameter set]
	For any small $0<\Delta<1$, we define a parameter set $\Omega_{\Delta, d}\subset\R^d$:
	\begin{equation*}
	\begin{aligned}
	\Omega_{\Delta, d}:=&\left\{\|\theta\|_2\leq1, \theta = [n_1\Delta, n_2\Delta, \ldots, n_d\Delta]^{\top},n_1, n_2, \ldots, n_d\in N_{\Delta, 1}\right\}
	\end{aligned}
	\label{equ_parameter_set}
	\end{equation*}
	\label{def_parameter_set}
\end{definition}

\begin{definition}[CDF set]
	For any small $0<\gamma<1$, we define a Cumulative Distribution Function (CDF) set $\cF_{\gamma}$:
	\begin{equation*}
		\cF_{\gamma} :=\left\{ 
	\begin{aligned}
	F: &\R\rightarrow[0,1]\text{ non decreasing },\\
	& F({v})=0 \text{ when } {v}<-1,\\
	& F({v})=1\text{ when } {v}>1,\\
	& \frac{F({v})}{\gamma}\in N_{\gamma,1} \text{ when } \pm{\frac{{v}}{\gamma}} \in N_{\gamma, 1}, \\
	&F({v}) = F(\lfloor{{v}}\rfloor_{\gamma})+\frac1{\gamma}(F(\lfloor{{v}}\rfloor_{\gamma}+\gamma) -F(\lfloor{{v}}\rfloor_{\gamma}))({v}-\lfloor{{v}}\rfloor_{\gamma})\text{ otherwise}
 	\end{aligned}
 \right\}.
 	\label{equ_cdf_set}
	\end{equation*}
	\label{def_cdf_set}
\end{definition}
Definition \ref{def_parameter_set} is straightforward as we use $\Delta^d$-grids to discretize the $[0,1]^d$ space. Definition \ref{def_cdf_set} actually represents such a family of CDF: the random variable is defined on $[-1,1]$, and its CDF equals some integer multiple of $\gamma$ when ${v}$ (or $-{v}$) itself is an integer multiple of $\gamma$; for those ${v}$ in between these grids, CDF connects the two endpoints as linear. In a word, each CDF in $\cF_{\gamma}$ is a piecewise linear function with every integer-multiple-$\gamma$ points valuating some integer-multiple-$\gamma$ as well. From the definitions above, we know that $|\Omega_{\Delta, d}|=O\left((\frac{1}{\Delta})^d\right)$. Also, we have $|\cF_{\gamma}|=\binom{\frac3{\gamma}}{\frac1{\gamma}} =O(2^{\frac3{\gamma}})$ according to a ``balls into bins'' model in combinatorial counting: At each point $\frac{\pm i}{\gamma}$ (for $i\in[\frac2\gamma]$) the CDF can increase by $j\cdot\gamma$, with $j$ being a non-negative integer, and the summation of all increases is $1$ (i.e., $\frac1\gamma$ of $\gamma$ increments).

Finally we introduce the \emph{EXP-4} algorithm \citep{auer2002nonstochastic} for adversarial contextual bandits. With a finite action set $A$ and policy set $\Pi$, the EXP-4 agent has a regret guarantee at $O(\sqrt{T|A|\log{|\Pi|}})$ in $T$ rounds (comparing with the optimal policy in $\Pi$). The following is a simplified version of EXP-4 that illustrates its mechanism. For a more detailed introduction, please directly refer to \citet{auer2002nonstochastic}.

\fbox{\parbox{0.93\textwidth}{EXP-4.
		\small
		\noindent
		\begin{algorithmic}
			\STATE {\bfseries Input:} {Policy set $\Pi$, Action set $A$.}
			\STATE Initialize each policy $i$ with weight $w_i$;
			\FOR{$t=1$ {\bfseries to} $T$}
			\STATE Set probability $p_j(t)$ for each action $j$ according to weights of all policies;
			\STATE Get $a_t$ by Thompson sampling the action set $A$ according to current probability $\{p_j(t)\}$;
			\STATE Receive a reward $r_t$;
			\STATE Construct an \emph{Inverse Propensity Scoring (IPS)} estimator $\hat{r}_i(t)$ for the reward of each action $i$.
			\STATE Update weights $w_i$'s according to $\hat{r}_i(t)$.
			\ENDFOR
		\end{algorithmic}
	}
}

\subsection{Linear-EXP4 for LP}
\label{subsec_linear_exp4}

Here we present our ``Linear-EXP4'' algorithm for the linear policy pricing problem. It takes $\Omega_{\Delta, d}$ as the policy set and plug it into EXP-4 algorithm, which is straightforward but significant in reducing the regret. The pseudo-code of Linear-EXP4 is summarized as Algorithm \ref{algo_linear_exp4}.

\begin{algorithm}[H]
	\caption{Linear-EXP4}
	\label{algo_linear_exp4}
	
	\begin{algorithmic}
		\STATE {\bfseries Input:} {Parameter set $\Omega_{\Delta, d}$, Action set $A_{\gamma}=\{0, \gamma, 2\gamma, \ldots, \lfloor{B}\rfloor_{\gamma}\}$, parameters $\Delta, \gamma$.}
		\STATE Set policy set $\Pi^{LP}_{\Delta, \gamma} = \{\pi_{\beta}(x) = \lfloor x^{\top}\beta\rfloor_{\gamma}, \beta\in\Omega_{\Delta, d}\}$
		\STATE Initialize an EXP-4 agent $\cE_{LP}$ with $\Pi^{LP}_{\Delta, \gamma}, A_{\gamma}$;
		\FOR{$t=1$ {\bfseries to} $T$}
		\STATE $\cE_{LP}$ observe $x_t$;
		\STATE $\cE_{LP}$ choose an action (price) $v_t$;
		\STATE Receive feedback $r_t = v_t\cdot\ind_t$ and feed it into $\cE_{LP}$;
		\ENDFOR
\end{algorithmic}
\end{algorithm}

Here the EXP-4 agent $\cE_{LP}$ would approach the best policy $\pi^*$ in $\Pi^{LP}_{\Delta, \gamma}$ within a reasonable regret. Therefore, we have to carefully choose $\Delta$ and $\theta$ such that the regrets of both $\cE_{LP}$ and $\pi^*$ are well bounded.

\subsection{Discrete-Distribution-EXP4 for LV}
\label{subsec_d2_exp4}

Here we present our ``Discrete-Distribution-EXP-4'' algorithm, or D2-EXP4 for the linear noisy valuation pricing problem. Though it originates EXP-4 as well as Linear-EXP4 above, the reduction is not as straightforward. In fact, the policy set is defined as follows:
\begin{footnotesize}
\begin{equation}
\label{equ_policy_d2_exp4}
\begin{aligned}
\Pi^{LV}_{\Delta, \gamma}=&\left\{\pi|\pi(x; \hat{\theta}, \hat{F})=\max\{\lfloor{x^{\top}\hat{\theta}}\rfloor_{\gamma}-(B+1)\gamma+\lfloor{w^*(x)}\rfloor_{\gamma}, 0\}, \right.\\
 \text{ where }&\left. w^*(x) =\argmax_w g(u+w, x^{\top}\hat{\theta}, \hat{F}), \hat{\theta}\in\Omega_{\Delta, d}, \hat{F}\in\cF_{\gamma}\right\}.
\end{aligned}
\end{equation}
\end{footnotesize}
For each policy in $\Pi^{LV}_{\Delta, \gamma}$, it firstly takes a $\hat{\theta}$ from $\Omega_{\Delta, d}$ and a $\hat{F}$ from $\cF_{\gamma}$, and then generate an ``optimal incremental price'' $w^*(x)$ greedily as if they are the true parameter $\theta^*$ and the true noise distribution $F$. Finally, the policy take an action (price) that is the summation of $\gamma$-lower roundings of $\hat{u}=x^{\top}\hat{\theta}$ and $w^*(x)$ to fit in the action set $A_{\gamma}:=\{0, \gamma, 2\gamma, \ldots, \lfloor{B+1}\rfloor_{\gamma}\}$, and minus a $(B+1)\gamma$ amount. We know that $|\Pi^{LV}_{\Delta, \gamma}| = |\Omega_{\Delta, d}|\cdot|\cF_{\gamma}| = O((\frac1{\Delta})^d\cdot2^{\frac3{\gamma}})$. We present the psuedo-code of D2-EXP4 as Algorithm \ref{algo_d2_exp4}.

\begin{algorithm}[H]
	\caption{Discrete-Distribution-EXP-4(D2-EXP4)}
	\label{algo_d2_exp4}
	
	\begin{algorithmic}
		\STATE {\bfseries Input:} {Policy set $\Pi^{LV}_{\Delta, \gamma}$, Action set $A_{\gamma}=\{0, \gamma, 2\gamma, \ldots, \lfloor{B+1}\rfloor_{\gamma}\}$, parameters $\Delta, \gamma$.}
		\STATE Initialize an EXP-4 agent $\cE_{LV}$ with $\Pi^{LV}_{\Delta, \gamma}, A_{\gamma}$;
		\FOR{$t=1$ {\bfseries to} $T$}
		\STATE $\cE_{LV}$ observe $x_t$;
		\STATE $\cE_{LV}$ select an action(price) $v_t$;
		\STATE Receive feedback $r_t = v_t\cdot\ind_t$ and feed it into $\cE_{LV}$;
		\ENDFOR
	\end{algorithmic}
\end{algorithm}

D2-EXP4 is straightforward that it takes the $\gamma$-rounding of a greedy price, except the $(B+1)\gamma$ price markdown. This is because we want a conservative price, and the $(B+1)\gamma$ markdown is to compensate the ``exaggerate'' $\lceil\theta\rceil_{\gamma}$ parameter we adopt in $\Pi^{LV}_{\Delta, \gamma}$. We will include more details in Paragraph \ref{paragraph_conservative} below and in Section \ref{subsec_upper_bound}.

\paragraph{Adversarial Features and Agnostic Distributions} Notice that both algorithms are suitable for adversarial $x_t$ series, which is a property of EXP-4. It is worth mentioning that our Linear-EXP4 makes \textbf{no} assumptions on the distribution of $y_t\text{ given } x_t$, and that D2-EXP4 assumes \textbf{no} pre-knowledge or technical assumptions on the noise distribution (despite that noises are bounded).
\paragraph{Conservative Pricing Strategy}\label{paragraph_conservative} Both of our algorithms adopt a \emph{conservative} strategy while pricing: In Linear-EXP4, a good-enough linear policy is the $\gamma$-lower rounding of parameter $\beta^*$; in D2-EXP4, we even define each policy by proposing a ``greedy-and-safe'' price which takes a $(B+1)\gamma$-markdown on the output of the optimal greedy pricing policy. This is because of the ``half-Lipschitz'' nature of a demand curve: decreasing the price would at least maintain the chance of being accepted. Since we do not make any Lipschitz or smoothness assumptions on the distributions, these discretizations might marginally increase the price and cause drastic change of the expected revenue. In order to avoid this, it is always better to decrease the proposed price by an acceptable small amount as it guarantees the probability of acceptance.
\paragraph{Computational Efficiency} Our algorithms require exponential computations w.r.t. dimension $d$ since the EXP-4 agent requires exponential time to evaluate each policy in the policy set. An ``optimization oracle''-efficient contextual bandit algorithm in \citet{agarwal2014taming} can be used in place of EXP-4 to achieve a near-optimal regret (up to logarithmic factors), but it requires the input features $x_t$ to be drawn from an unknown fixed distribution. 

\section{REGRET ANALYSIS}	
\label{sec_regret_analysis}
In this section, we analyze our Linear-EXP4 and D2-EXP4 algorithm and prove their $\tilde{O}(d^{\frac13}T^{\frac23})$ and $O(T^{\frac34})$ regret bounds, respectively. Also, we present a scenario where a lower bound construction with $\tilde{\Omega}(T^{\frac23})$ regret fits for both LP and LV problems, even under stronger assumptions including stochastic $x_t$'s, Lipschitz distribution functions and unimodal demand curves.

\subsection{Upper Bounds}
\label{subsec_upper_bound}
Here we propose the following theorem as a regret bound of Linear-EXP4. This only requires the assumption that features $x_t$'s and (potential) optimal parameter $\beta^*$ is bounded by $L_2$-norm, without making any specifications on the feature-valuation mapping.

\begin{theorem}[Regret of Linear-EXP4] In any LP problem, with Assumption \ref{assump_bounded_features_parameters}, the expected regret of Linear-EXP4 does not exceed $O(d^{\frac13}T^{\frac23}\log{dT})$ by setting $\Delta = T^{-\frac13}d^{-\frac16}$ and $\gamma = T^{-\frac13}d^{\frac13}$.
	\label{theorem_upper_LP}
\end{theorem}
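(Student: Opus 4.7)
My plan is to decompose the expected regret into two pieces: an EXP-4 regret against the best policy inside the discrete class $\Pi^{LP}_{\Delta,\gamma}$, plus an approximation-error term between that best discrete policy and the true linear optimum $\beta^*$. The first piece is purely black-box: by \citet{auer2002nonstochastic}, EXP-4 with $|A_\gamma|=O(B/\gamma)$ arms and $|\Pi^{LP}_{\Delta,\gamma}|\le|\Omega_{\Delta,d}|=O((1/\Delta)^d)$ experts, run against an adversary with rewards in $[0,B]$, incurs expected regret of order $B\sqrt{T\,|A_\gamma|\,\log|\Pi^{LP}_{\Delta,\gamma}|}=\tilde O\bigl(\sqrt{Td/\gamma}\bigr)$ since $\log(1/\Delta)=O(\log(dT))$ absorbs into polylog factors.

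For the approximation piece, I would exhibit one explicit witness policy $\pi_{\tilde\beta}\in\Pi^{LP}_{\Delta,\gamma}$ with $\tilde\beta:=\lfloor\beta^*\rfloor_\Delta$ and show that it nearly matches $\beta^*$ in cumulative revenue. Because Assumption~\ref{assump_bounded_features_parameters} forces $x_t,\beta^*\in\R_+^d$, coordinatewise lower-rounding yields $0\le\tilde\beta\le\beta^*$ componentwise, hence $\|\tilde\beta\|_2\le\|\beta^*\|_2\le1$ (so $\tilde\beta\in\Omega_{\Delta,d}$) and $\pi_{\tilde\beta}(x_t)=\lfloor x_t^\top\tilde\beta\rfloor_\gamma\le x_t^\top\tilde\beta\le x_t^\top\beta^*$. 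Cauchy--Schwarz together with the $\gamma$-rounding slack then gives a uniform price gap $x_t^\top\beta^*-\pi_{\tilde\beta}(x_t)\le\|x_t\|_2\|\beta^*-\tilde\beta\|_2+\gamma\le B\Delta\sqrt d+\gamma$.

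The crux is converting this price gap into a revenue gap without any Lipschitz assumption on $F_{LP}$. I would invoke Assumption~\ref{assump_decrease_demand_LP} (demand non-increasing in price) together with $\pi_{\tilde\beta}(x_t)\le x_t^\top\beta^*$ to write
\[
h(x_t^\top\beta^*,x_t)-h(\pi_{\tilde\beta}(x_t),x_t)\le(x_t^\top\beta^*-\pi_{\tilde\beta}(x_t))\cdot F_{LP}(x_t^\top\beta^*\mid x_t)\le x_t^\top\beta^*-\pi_{\tilde\beta}(x_t),
\]
since lowering the price can only raise the acceptance probability. Summed over $t$, this bounds the approximation regret by $T(B\Delta\sqrt d+\gamma)$. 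Combining the two pieces, $\mathbb E[\mathrm{Reg}_{LP}]=\tilde O\bigl(\sqrt{Td/\gamma}+TB\Delta\sqrt d+T\gamma\bigr)$; substituting $\Delta=T^{-1/3}d^{-1/6}$ and $\gamma=T^{-1/3}d^{1/3}$ balances all three terms at $T^{2/3}d^{1/3}$ up to polylog factors, matching the claim.

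The main obstacle I expect is the witness construction, not the concentration/bandit machinery. The argument above is clean \emph{only} because non-negativity of $x_t$ and $\beta^*$ lets the single vector $\lfloor\beta^*\rfloor_\Delta$ simultaneously (i) stay inside the unit $L_2$-ball, and (ii) underprice on every $x_t$, both of which are essential for the half-Lipschitz step. Without the sign restriction in Assumption~\ref{assump_bounded_features_parameters}, $\lfloor\beta^*\rfloor_\Delta$ could violate $\|\cdot\|_2\le 1$ and, more dangerously, round $x_t^\top\beta^*$ \emph{upward} on some $t$, forfeiting the one-sided revenue bound and forcing a two-sided discretization that would pay a Lipschitz penalty we cannot afford. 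Beyond this modelling subtlety the remaining steps are routine EXP-4/discretization glue.
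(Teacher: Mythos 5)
Your proposal is correct and follows essentially the same route as the paper's proof: the same decomposition into EXP-4 regret plus approximation error, the same witness $\lfloor\beta^*\rfloor_\Delta$ kept feasible and underpricing via non-negativity, the same monotone-demand (half-Lipschitz) step, and the same parameter balancing. If anything, you are slightly more careful than the paper in tracking the extra $\gamma$ price-rounding slack (an additional $T\gamma$ term), which is of the same $T^{2/3}d^{1/3}$ order and does not change the bound.
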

\begin{proof}
	We denote $\tilde{\beta}^*=\lfloor\beta^*\rfloor_{\Delta}$ and $\hat{\beta}^*:= \argmax_{\beta\in\Omega_{\Delta, d}}\sum_{t=1}^T\E[h(\pi_{\beta}(x_t), x_t)]$.
	Now we decompose the regret of LP problem as follows:
	\begin{footnotesize}
		\begin{equation}
		\begin{aligned}
		\E[Reg_{LP}]=&\sum_{t=1}^T\E[h(x_t^{\top}\beta^*, x_t)-h(v_t, x_t)]\\
		=&\sum_{t=1}^T\E[h(x_t^{\top}\beta^*, x_t)-h(\pi_{\tilde{\beta}^*}(x_t), x_t)] +\E[h(\pi_{\tilde{\beta}^*}, x_t)-h(\pi_{\hat{\beta}^*}(x_t), x_t)] +\E[h(\pi_{\hat{\beta}^*}(x_t), x_t)-h(v_t, x_t)]\\
		\leq&\sum_{t=1}^T(x_t^{\top}\beta^*-x_t^{\top}\tilde{\beta}^*)F_{LP}(x_t^{\top}\beta^*|x_t) +\E[h(\pi_{\tilde{\beta}^*}, x_t)-h(\pi_{\hat{\beta}^*}(x_t), x_t)] +\E[h(\pi_{\hat{\beta}^*}(x_t), x_t)-h(v_t, x_t)]\\
		\leq&\sum_{t=1}^TB\cdot\Delta\sqrt{d} + 0 + \sqrt{T\cdot\frac{1}{\gamma}\cdot\log{(\frac1{\Delta})^d}}\\
		=&O(d^{\frac13}T^{\frac23}\log{dT}).
		\end{aligned}
		\label{equ_LP_decomposition_regret}
		\end{equation}
	\end{footnotesize}
	Here the third row is because $\pi_{\tilde{\beta}^*}(x_t) = \lfloor{x_t^{\top}\tilde{\beta}^*}\rfloor_{\gamma}\leq x_t^{\top}\tilde{\beta}^*\leq x_t^{\top}\beta^*$ since $x_t, \beta\in\R^d_+$ (and thus $F_{LP}(x_t^{\top}\beta^*)\leq F_{LP}(x_t^{\top}\tilde{\beta}^*)$); The fourth row is because $(x_t^{\top}\beta^*-x_t^{\top}\tilde{\beta}^*)\leq\|x_t\|_2\cdot\|\beta^*-\tilde{\beta}^*\|\leq B\cdot\Delta\sqrt{d}$, the optimality definition of $\hat{\beta}^*$ and the regret bound of EXP-4 from \citet{auer2002nonstochastic}; The last row is got by plugging in the value of $\Delta$ and $\gamma$.
\end{proof}
The proof of Theorem \ref{theorem_upper_LP} is straightforward based on the existing $O(\sqrt{T|A|\log{|\Pi|}})$ bound of EXP-4. We only have to bound the error of the optimal policy in $\Pi_{\Delta, \gamma}$. Now we present our result on D2-EXP4:
\begin{theorem}[Regret of D2-EXP4]
	For any LV problem, with Assumptions \ref{assump_bounded_features_parameters}, \ref{assump_decrease_demand_LP} and \ref{assump_bounded_noise}, our algorithm D2-EXP4 guarantees a regret no more than $O(T^{\frac34}+T^{\frac23}d^{\frac12}\log{dT})$ as we set $\Delta = T^{-\frac14}d^{-\frac12}$ and $\gamma = T^{-\frac14}$.
	\label{theorem_upper_LV}
\end{theorem}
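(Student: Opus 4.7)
I mirror the decomposition used in the proof of Theorem~\ref{theorem_upper_LP}, but add one more approximation layer to handle the agnostic noise CDF. Let $\pi^{\dagger}\in\Pi^{LV}_{\Delta,\gamma}$ be an ``ideal'' policy built from $\hat\theta^{\dagger}=\lceil\theta^*\rceil_{\Delta}\in\Omega_{\Delta,d}$ together with $\hat F^{\dagger}\in\cF_{\gamma}$ obtained by rounding each value $F(\pm k\gamma)$ to the nearest integer multiple of $\gamma$ and linearly interpolating. Decompose
\begin{align*}
\E[Reg_{LV}]&=\sum_{t=1}^{T}\E\bigl[g(v_t^*,u_t,F)-g(\pi^{\dagger}(x_t),u_t,F)\bigr]\\
&\quad+\sum_{t=1}^{T}\E\bigl[g(\pi^{\dagger}(x_t),u_t,F)-g(v_t,u_t,F)\bigr].
\end{align*}

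The second sum is bounded by EXP-4's optimality against the best-in-class policy: substituting $|A_{\gamma}|=O(B/\gamma)$ and $\log|\Pi^{LV}_{\Delta,\gamma}|=O\bigl(d\log(1/\Delta)+1/\gamma\bigr)$ into the $O(\sqrt{T|A|\log|\Pi|})$ bound of \citet{auer2002nonstochastic}, and plugging in $\Delta=T^{-1/4}d^{-1/2},\ \gamma=T^{-1/4}$, gives $O\bigl(T^{3/4}+T^{5/8}\sqrt{d\log(dT)}\bigr)$, which is absorbed into the claimed $O(T^{3/4}+T^{2/3}d^{1/2}\log(dT))$.

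For the first (approximation) sum, the key structural property is \emph{half-Lipschitz}: whenever $v_1\leq v_2$, we have $g(v_2,u,F)-g(v_1,u,F)\leq v_2-v_1$ because the acceptance probability is at most~$1$. I will show that $\pi^{\dagger}(x_t)\leq v_t^*$ with $v_t^*-\pi^{\dagger}(x_t)\leq O(\gamma+B\Delta\sqrt d)$; half-Lipschitz then converts each price gap into a revenue gap of the same order, summing to $O(T\gamma+TB\Delta\sqrt d)=O(T^{3/4})$ for the stated parameters. The domination $\pi^{\dagger}(x_t)\leq v_t^*$ is engineered by the $(B+1)\gamma$-markdown in~\eqref{equ_policy_d2_exp4}: with $\hat u_t=x_t^\top\hat\theta^{\dagger}$ overshooting $u_t$ by at most $B\Delta\sqrt d$, and with $\hat F^{\dagger}$ matching $F$ to $O(\gamma)$ on the grid, the markdown is sized to absorb the combined slack from the two $\gamma$-floors, the $\hat\theta$-overshoot, and the $\hat F^{\dagger}$-mismatch, leaving only the claimed residual gap.

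\emph{Main obstacle.} The hardest step is verifying $\pi^{\dagger}(x_t)\leq v_t^*$, which reduces to controlling $w^*_{\hat F^{\dagger}}(\hat u_t)-w_F^*(u_t)$ without any Lipschitz or smoothness hypothesis on $F$. Because $F$ may be discontinuous, both $\argmax$ operations can jump under tiny perturbations of $u$ or of the CDF, so a pointwise Lipschitz comparison of the two optimal markups is simply not available. The resolution is to only ever invoke the favorable (lower-price) direction of half-Lipschitz: the conservative $(B+1)\gamma$-markdown together with the upward rounding of $\hat\theta^{\dagger}$ ensure that every revenue comparison is made between $v_t^*$ and a strictly smaller price, so the pointwise sensitivity of the optimizing markup to $F$ is never bounded, only the resulting price gap. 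Carefully book-keeping the rounding directions so that the markdown is provably large enough in every case is where the main technical work lies.
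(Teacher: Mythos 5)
Your high-level decomposition (realizability error of a surrogate policy in $\Pi^{LV}_{\Delta,\gamma}$ plus the EXP-4 regret against the best in-class policy) is the same as the paper's, and your EXP-4 term $O(\sqrt{T|A_\gamma|\log|\Pi^{LV}_{\Delta,\gamma}|})=O(T^{3/4}+T^{5/8}\sqrt{d\log(dT)})$ is handled correctly. The gap is in the approximation step. You plan to prove $\pi^{\dagger}(x_t)\leq v_t^*$ together with $v_t^*-\pi^{\dagger}(x_t)\leq O(\gamma+B\Delta\sqrt{d})$ and then convert the price gap into a revenue gap via half-Lipschitzness. Neither the domination nor the price-gap bound holds in general: the policy's markup is $\hat w(\hat u)=\argmax_w g(\hat u+w,\hat u,\hat F)$, and with no Lipschitz or smoothness assumption on $F$ the revenue curve $w\mapsto (u+w)(1-F(w))$ can have two nearly tied maximizers at markups separated by a constant, so an $O(\gamma)$ perturbation of the CDF (or an $O(B\Delta\sqrt d)$ shift of $u$) can move the $\argmax$ by $\Theta(1)$. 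Then $\pi^{\dagger}(x_t)$ sits a constant distance from $v_t^*$, possibly above it, and the $(B+1)\gamma$ markdown cannot repair this, since it only offsets the $O(\gamma)$ overshoot of $\lceil\theta^*\rceil_{\Delta}$ and the rounding slack. You explicitly identify this obstacle, but the proposed resolution does not resolve it: it still culminates in a claim about the price gap, which is exactly the quantity that cannot be controlled in the agnostic setting.

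The paper's proof (Appendix \ref{appendix_proof_upper_bound_of_lv}) never compares prices; it compares revenues through the surrogate problem. Concretely: (i) the markdown guarantees that the realized markup over the \emph{true} $u$ is at most the grid point $\lfloor\hat w(\hat u)\rfloor_{\gamma}$, so $g(\hat v,u,F)\geq(u+\lfloor\hat w(\hat u)\rfloor_{\gamma})(1-F(\lfloor\hat w(\hat u)\rfloor_{\gamma}))-O(B\gamma)$; (ii) on grid points $\hat F(i\gamma)\leq F(i\gamma)\leq\hat F(i\gamma)+\gamma$, which converts this to $g(\hat u+\hat w(\hat u),\hat u,\hat F)-O(B\gamma)$; (iii) by optimality of $\hat w(\hat u)$ for $\hat F$, this dominates $g$ at the $\gamma$-rounded \emph{true} optimal markup, $g(u+\lfloor w^*(u)\rfloor_{\gamma},u,\hat F)$, up to $O(B\gamma)$; and (iv) using $\hat F(i\gamma)\leq F(i\gamma)$ and monotonicity of $F$, this is at least $g(u+w^*(u),u,F)-O(\gamma)$, giving a per-round loss of $(3B+5)\gamma$. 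Note that this chain never needs $\hat w(\hat u)$ to be close to $w^*(u)$; it only needs the two CDFs to agree up to $\gamma$ on the $\gamma$-grid and the surrogate markup to be optimal for the surrogate objective, with the markdown serving solely to land the true markup at (or below) a grid point where that comparison is valid. To make your argument work you would have to replace the price-proximity claim with such a revenue-level sandwich; as written, the central step fails.
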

The proof of Theorem \ref{theorem_upper_LV} is more sophisticated than that of Theorem \ref{theorem_upper_LP}, but they shares similar structures: we figure out one specific policy in $\Pi^{LV}_{\Delta, \gamma}$ that is close to the optimal policy of the LV problem. The main idea of this proof is to find out a tuple of $(\hat{\theta}, \hat{F})$ that approaches the true parameter and distribution, and to verify that the policy built on this approaching tuple is reliable only within small tractable error. The highlight is that we do not assume any Lipschitzness on the distribution, which is quite different from existing approximation methods. In fact, it is the natural property of pricing problems that enables this: for two prices $v_1\geq v_2$, the probability of $v_2$ being accepted is greater (or equal) than that of $v_1$, and thus $(v_1-v_2)\geq g(v_1, u, F)-g(v_2, u, F)$. We may call it a \emph{Half-Lipschitz} property since it only upper bounds the increasing rates.

Here we show a proof sketch of Theorem \ref{theorem_upper_LV}, and leave the bulk to Appendix \ref{appendix_proof_upper_bound_of_lv}.

\begin{proof}[Proof Sketch]
	For any specific LV problem with linear parameter $\theta^*$ and noise CDF $F$, we define $\hat{\theta}^*:=\lceil\theta^*\rceil_{\Delta}$ and $\hat{F}$:
	
	\begin{equation}
	\begin{aligned}
	\hat{F}(x)
	=& \lfloor{F(x)}\rfloor_{\gamma}\text{ when } x=i\cdot\gamma\text{ for }i\in\mathbb{Z},\text{ and }\text{ linearly connecting }
	\hat{F}({i}{\gamma})\text{ with } \hat{F}({i+1}{\gamma})\\
	&\text{ when } x\in({i}{\gamma}, (i+1){\gamma}).\\
	\end{aligned}
	\label{equ_tilde_f_and_hat_f_and_hat_theta_star_main}
	\end{equation}
	Our goal is to prove that $\pi(x; \hat{\theta}^*, \hat{F})$ performs well enough. We may furthermore define a few amounts:
	\begin{enumerate}[label=(\roman*)]
		\item $\hat{u} = x^{\top}\hat{\theta}^*$;
		\item $w^*(u)=\argmax_w g(u+w, u, F)$;
		\item $ \hat{w}^*(u)=\argmax_w g(u+w, u, F)$;
		\item $ \hat{w}(\hat{u})=\argmax_w g(\hat{u}+w, \hat{u}, \hat{F})$.
	\end{enumerate}
	Therefore, the price our algorithm proposed for feature $x$ is $\hat{v}(x) = \lfloor\hat{u}\rfloor_{\gamma}-(B+1)\gamma+\lfloor\hat{w}(\hat{u})\rfloor_{\gamma}$, and our goal is to prove that $g(\hat{v}, u, F)\geq g(u+w^*(u), u, F) - C\cdot\gamma$ for some constant $C$. Since $\gamma = T^{-\frac14}$, this would upper bounds the optimality error up to $O(T\cdot\gamma)=O(T^{\frac34})$. In fact, we have the following properties:
	\begin{enumerate}[label=(\roman*)]
		\item $\hat{\theta}^*=\lceil\theta^*\rceil_{\Delta}$ (by definition);
		\item $\|\theta^*\|_2\leq\|\hat{\theta}^*\|_2\leq\|\theta^*\|_2+\Delta\sqrt{d}=\|\theta^*\|_2+\gamma$;
		\item $u-\gamma\leq\hat{u}-\gamma\leq\lfloor\hat{u}\rfloor_{\gamma}\leq\hat{u}\leq u+B\gamma$;
		\item $\hat{F}({i}{\gamma})\leq F({i}{\gamma})\leq\hat{F}({i}{\gamma})+\gamma$.
	\end{enumerate}
	According to these properties, we may derive:
	\begin{equation*}
	\begin{aligned}
	&g(\lfloor\hat{u}\rfloor_{\gamma}-(B+1)\gamma+\lfloor\hat{w}(\hat{u})\rfloor_{\gamma}, u, F)\\
	\geq&(u+\lfloor\hat{w}(\hat{u})\rfloor_{\gamma})(1-F(\lfloor\hat{w}(\hat{u})\rfloor_{\gamma}))-(B+2)\gamma\\
	\geq&(u+\lfloor\hat{w}(\hat{u})\rfloor_{\gamma})(1-\hat{F}(\hat{w}(\hat{u})))-(2B+3)\gamma\\
	\geq&g(\hat{u}+\hat{w}(\hat{u}), \hat{u}, \hat{F})-(3B+4)\gamma\\
	\geq&g(u+\hat{w}^*(u), u, \hat{F})-(3B+4)\gamma\\
	\geq&g(u+w^*(u), u, F)-(3B+5)\gamma.
	\end{aligned}
	\end{equation*}
	The derivation of each step is shown in Appendix \ref{appendix_proof_upper_bound_of_lv}. With this policy-realizability error being bounded by $(3B+5)\gamma=O(T^{\frac34})$ and the original regret of the EXP-4 agent being $O(\sqrt{TK\log N})=\tilde{O}(T^{\frac34} + d^{\frac12}T^{\frac58})$, we may finally get a $\tilde{O}(T^{\frac34}+d^{\frac12}T^{\frac58})$ upper regret bound.
\end{proof}

\subsection{Lower Bounds}
\label{subsec_lower_bound}

In this part, we present an $\tilde{\Omega}(T^{\frac23}d^{\frac13})$ and an $\tilde{\Omega}(T^{\frac23})$ regret lower bounds that hold for LP and LV problems respectively. We will firstly claim a lower bound for non-contextual pricing problem, and then generalize the result to LP and LV.

\begin{theorem}[Lower bound for non-contextual pricing]\label{theorem_lower_bound_main}
	For a non-contextual pricing problem where the valuation $y_t$'s are generated independently and identically from a fixed unknown distribution satisfying (1) the CDF $F(y)$ is Lipschitz and (2) the revenue curve $g(v, F) = y\cdot(1-F(v))$ is unimodal (i.e., non-decreasing on $(0,v_0)$ and non-increasing on $(v_0, +\infty)$ for some $v_0$), NO algorithm can achieve $O(T^{\frac23-\delta})$ for any $\delta>0$.
\end{theorem}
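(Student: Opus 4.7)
The plan is to adapt the bump-function lower-bound machinery of \citet{kleinberg2004nearly} for Lipschitz continuum bandits to the pricing setting with Boolean-censored feedback $\ind(v_t\leq y_t)\sim\mathrm{Bern}(1-F(v_t))$. I would construct a family of valuation distributions that share a common flat-revenue ``plateau'' and differ only by a single localized bump, then use a Le Cam / change-of-measure argument to turn the resulting statistical indistinguishability into a regret lower bound.

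Concretely, I would take the base CDF $F_0(v):=\max\{0,\,1-1/v\}$, for which $g_0(v)=v(1-F_0(v))$ equals $v$ on $[0,1]$ and is exactly constant, equal to $1$, on $[1,\infty)$; hence $g_0$ is weakly unimodal and $F_0$ is Lipschitz with constant $1$. For a grid $a_k=1+(k+\tfrac12)w$ with $w=1/K$, $k=0,\dots,K-1$, I would define
\[ F_k(v) \;:=\; F_0(v)-\tfrac{\epsilon}{v}\,\phi\!\Bigl(\tfrac{v-a_k}{w}\Bigr), \]
where $\phi$ is a fixed smooth bump supported on $[-\tfrac12,\tfrac12]$ with $\max\phi=1$. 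Then $g_k(v)=1+\epsilon\,\phi((v-a_k)/w)$ on $[1,\infty)$, so $g_k$ is globally unimodal with unique maximum $1+\epsilon$ at $a_k$. Choosing $\epsilon\lesssim w$ ensures $F_k$ is a valid monotone CDF with Lipschitz constant uniformly bounded in $k$. The perturbation is supported on $B_k:=[a_k-w/2,a_k+w/2]$, and $|F_k-F_0|=O(\epsilon)$ there while $F_k\equiv F_0$ elsewhere.

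Next I would run a standard change-of-measure. Writing $N_k:=\sum_{t=1}^T\ind(v_t\in B_k)$, the per-round KL between the Bernoulli feedback under $F_0$ and $F_k$ is $O(\epsilon^2)$ inside $B_k$ and zero outside, so $\mathrm{KL}(\P_0^T\,\|\,\P_k^T)\le O(\epsilon^2)\,\E_0[N_k]$. Pinsker then yields $\E_k[N_k]\le\E_0[N_k]+T\sqrt{O(\epsilon^2)\E_0[N_k]}$. Since the $B_k$'s are disjoint, $\sum_k\E_0[N_k]\le T$, so some $k^\star$ satisfies $\E_0[N_{k^\star}]\le T/K$. Under $F_{k^\star}$, every round with $v_t\notin B_{k^\star}$ costs $\Omega(\epsilon)$, so
\[ \E_{k^\star}[\mathrm{Reg}] \;\geq\; \epsilon\bigl(T-\E_{k^\star}[N_{k^\star}]\bigr) \;\geq\; \epsilon T\Bigl(1-\tfrac1K\Bigr) - O\!\left(\epsilon^2 T\sqrt{T/K}\right). \]
The Lipschitz constraint forces $\epsilon\asymp w=1/K$, i.e.\ $K\asymp 1/\epsilon$. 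Setting $\epsilon\asymp T^{-1/3}$ (with a sufficiently small constant) then balances the two terms at $\Omega(T^{2/3})$, which immediately rules out any $O(T^{2/3-\delta})$ bound.

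The main obstacle is jointly respecting all three structural requirements---Lipschitzness of $F$, unimodality of $g$, and the rigid algebraic coupling $g(v)=v(1-F(v))$. This coupling forces a $g$-bump of amplitude $\epsilon$ and width $w$ to back-translate into an $F$-bump of Lipschitz constant $O(\epsilon/(wv))$, which blows up near $v=0$; that is why the construction must live on a window bounded away from zero and why bump height and width are tied by $\epsilon\lesssim w$, precisely the constraint that drives the $T^{2/3}$ exponent. Enforcing unimodality globally (rather than only within one bump) is the reason to use an \emph{exactly} flat base revenue $g_0\equiv 1$ on the plateau, so that inserting a single localized bump keeps the entire curve unimodal with no auxiliary correction. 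Once the family is in place, the KL computation and the Pinsker step are routine, and lifting from non-contextual pricing to LP and LV is immediate by embedding the hard instance via a constant feature vector (with a trivial rescaling to fit whatever bounded range LV requires).
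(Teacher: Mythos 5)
Your construction is essentially sound and it reaches the right exponent by the same information-theoretic core as the paper (Bernoulli KL for the Boolean feedback, change of measure over disjoint bumps, and the key coupling that Lipschitzness of $F$ together with $g(v)=v(1-F(v))$ forces bump height $\epsilon\lesssim$ width $w$, hence $T^{2/3}$). But it is a genuinely different route: you use a single-scale, per-horizon family of perturbations of an exactly flat revenue plateau (a Pareto-type base $F_0(v)=1-1/v$), whereas the paper follows \citet{kleinberg2004nearly} and builds one \emph{fixed} distribution via nested intervals $[a_k,b_k]$ with widths $3^{-k!}$, then transforms the resulting bump tower through $G=1-1/(f+1)$ and an affine map so that $D(v)/v$ is non-increasing. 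What each buys: your argument is considerably shorter and yields a clean minimax bound $\Omega(T^{2/3})$ for every $T$, which suffices to rule out any uniform $O(T^{2/3-\delta})$ guarantee over the class; the paper's nested construction is heavier but delivers the stronger reading of the theorem, namely a single fixed distribution on which a given algorithm incurs regret $\Omega(n_k^{2/3-1/(3k)})$ at infinitely many horizons $n_k$ (your $T$-dependent family does not give that statement as written, so if the theorem is read in that fixed-instance sense you would need the multi-scale construction too).

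Two technical points need patching. First, your claim that the per-round KL inside $B_k$ is $O(\epsilon^2)$ fails for bumps adjacent to $v=1$: there the acceptance probability $1-F_0(v)=1/v$ approaches $1$, the complementary probability in the KL denominator is only $O(kw)$, and the bound degrades to $O(\epsilon^2/w)$ for $k=O(1)$, which would spoil the balance if the averaging argument happened to select such a bump. The fix is trivial --- place the grid on, say, $[3/2,2]$ so acceptance probabilities stay in $[1/2,2/3]$ (the paper does the analogous bookkeeping by pinning $D_j(v_t)/v_t\in[1/2,5/6]$ via its choice of $C_f$ and $b$). Second, your base valuation is unbounded (Pareto tail), which is admissible for the theorem as stated but not for the downstream corollaries: the LV reduction requires bounded valuations (noise in $[-1,1]$), and rescaling alone does not bound a Pareto law. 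Truncating $F_0$ (e.g., letting $1-F$ fall linearly to $0$ beyond the bump region, which preserves Lipschitzness and keeps $v(1-F(v))$ non-increasing past the peak) and then rescaling restores the reduction; the paper avoids this issue because its construction is supported on $[0,1]$ from the start.
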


The detailed proof of Theorem \ref{theorem_lower_bound_main} is in Appendix \ref{appendix_proof_lower_bound}, and in the main pages we briefly demonstrate the constructions of the subproblem family where we achieve this lower bound.

Here we take the idea of \citet{kleinberg2004nearly} where they make use of bump functions and nested intervals to ensure Lipschitz continuity and unimodality, sequentially. Since that their model is not capturing a revenue curve and that their feedback is numerical instead of Boolean, we have to adjust their design to satisfy the pricing setting. On the one hand, the probability of a price to be accepted, i.e., the rate $\frac{\E[r(v)]}{v}$, is non-increasing as the prices increases, which is not guaranteed for that of a reward function of a continuum bandit (if we treat $v$ as an action). In this proof, we adopt a series of transformations to  convert the ``bump function tower'' into a revenue curve while keeping all monotonically-increasing/decreasing intervals unchanged. On the other hand, we still use the KL-divergence to distinguish among distributions, but in a different way. As for Boolean feedback, we only need to calculate the KL-divergence of two Bernoulli random variables, which can be upper bounded by a quadratic term of their probabilistic difference. 

\begin{figure}[ht]
	\centering
	\includegraphics[width=0.49\textwidth]{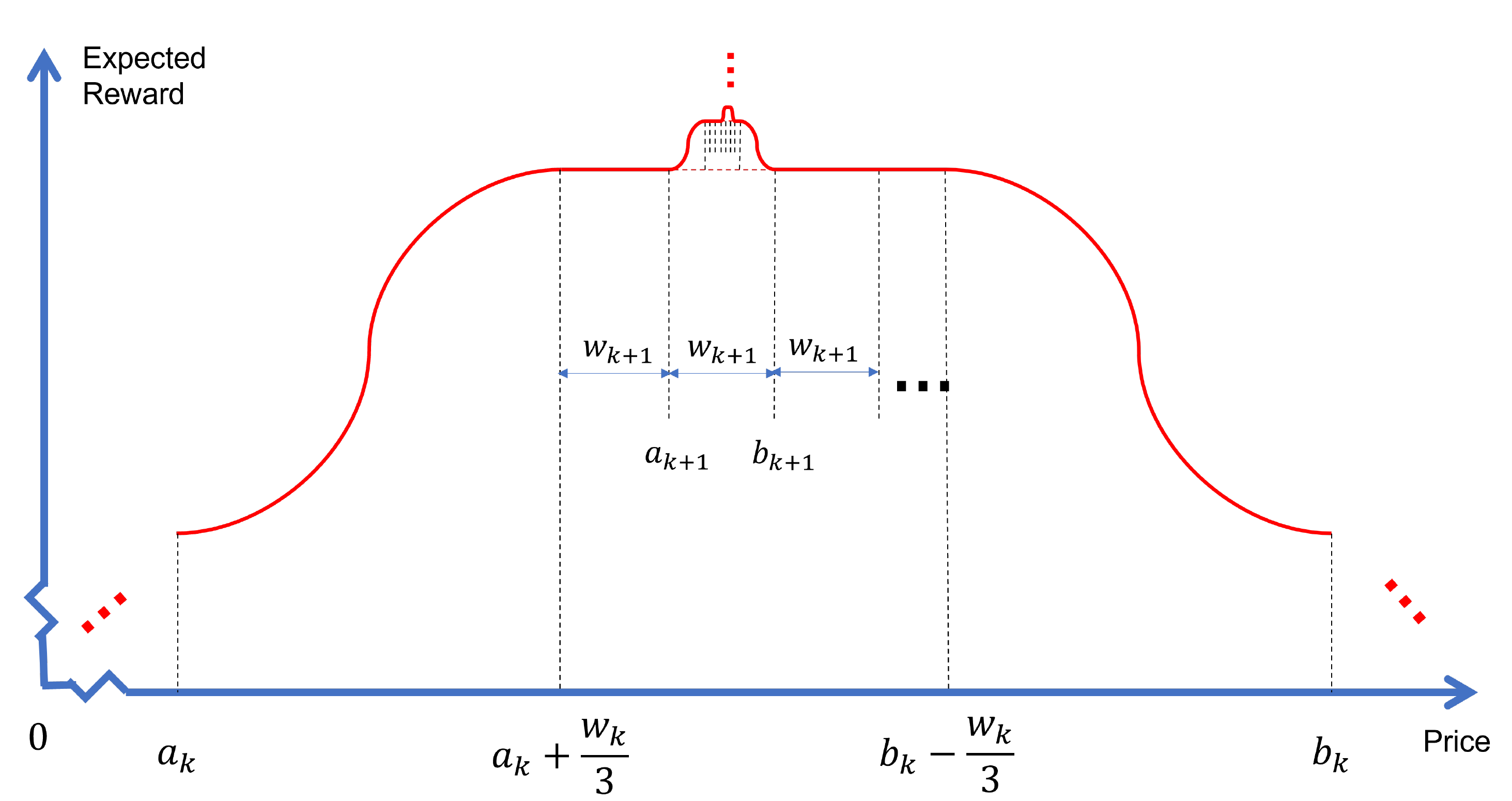}
	\includegraphics[width=0.49\textwidth]{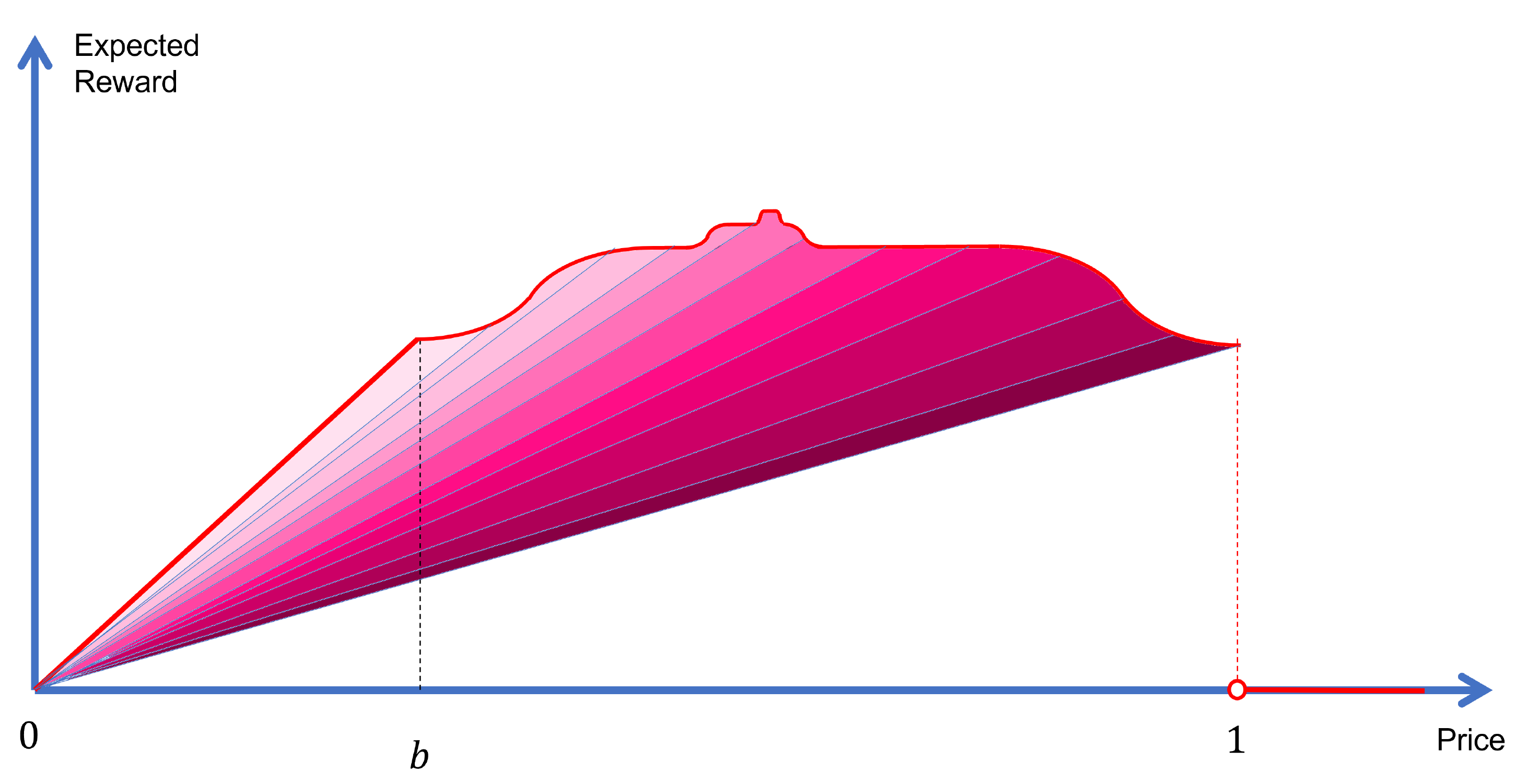}
	\caption{Structure of our lower bound function family. The left figure shows how we use bump functions to construct a reward function $f(v)$. Each bump function locates at $[a_k, b_k]$ with a length $w_k=3^{-k!}$. Notice that the middle one-third of each bump is a plain divided into small intervals of length $w_{k+1}$, and we might randomly choose one to build up the $(k+1)^{\text{th}}$ bump. However, 
		the rate $\frac{f(v)}{v}$ that indicates the probability of $v$ to be accepted is not necessarily non-increasing, and therefore $f(v)$ cannot capture a revenue function for pricing. The right figure shows an ideal revenue curve $D(v)$ which equals $v$ for $v\in[0,b]$ and equals $b+(1-b)(1-\frac1{f(v)+1})$ for $v\in(b,1]$. The slopes indicate that $\frac{D(v)}{v}$ is actually non-increasing. We draw the figures with exaggeration to show the hierarchical structures better.} 
	\label{figure_lower_bound_bump}
\end{figure}

The constructions of bump-based revenue curves are illustrated in Figure \ref{figure_lower_bound_bump}. Firstly, we define a nested-interval series $[0,1]=[a_0,b_0]\supset[a_1,b_1]\supset\ldots\supset[a_{k}, b_{k}]\supset\ldots$, where $b_k=a_k+w_k$, $w_{k}=3^{-k!}$. We let $a_k$ be chosen from the discrete set $\{a_{k-1}+\frac{w_{k-1}}3+i\cdot{w_k}, i=0,1,2,\ldots, \frac{w_{k-1}}{3w_k}\}$. Secondly, we construct Lipschitz bump functions in each $[a_k, b_k]$ interval, the middle one-third of which is a plain line 
Thirdly, we add all these bump function up, which forms a ``tower'' with its peak randomly generated by the series of tightening intervals $\{[a_k, b_k]\}$. Finally, it is transformed into a revenue curve after a series of operations.

If we treat this randomly-generated function a uniformly-distrbuted family of functions, then we can further prove our lower bound: On the one hand, we prove that the feedback cannot accurately locate where the ``peak of the tower'' is, from the perspective of information theory. In fact, any algorithm would have a constant chance of missing the peak. On the other hand, the cost of missing a peak can be lower bounded, and thus the expected regret is as well lower bounded by their product.


With this theorem holds, we can soon get the following two corollaries:
\begin{corollary}[Lower bound of LP problem]
	The regret lower bound for LP problems is $\tilde{\Omega}(d^{\frac13}T^{\frac23})$, even with stochastic features and distributional properties same as those in Theorem \ref{theorem_lower_bound_main}.
	\label{corollary_lower_bound_lp}
\end{corollary}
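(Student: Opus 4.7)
The plan is to reduce the LP lower bound to Theorem~\ref{theorem_lower_bound_main} by embedding $d$ essentially independent non-contextual pricing instances into a single LP problem, so that the total regret is the sum of $d$ per-subproblem regrets, each of order $\tilde{\Omega}((T/d)^{2/3})$.

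First, I would construct stochastic features by drawing $x_t$ i.i.d.\ uniformly from the standard basis $\{e_1,\ldots,e_d\}$; this respects $\|x_t\|_2=1\le B$ and $x_t\in\R^d_+$. For each direction $i\in[d]$, I would independently sample a hard non-contextual distribution $D_i$ from the bump-tower family appearing in the proof of Theorem~\ref{theorem_lower_bound_main}, choosing the rescaling so that the per-coordinate optimal price $v_i^*$ lies in $[0,1/\sqrt{d}]$; this ensures $\sum_i(v_i^*)^2\le 1$ so that the optimal linear policy $\beta^*$ with $\beta^*_i=v_i^*$ respects $\|\beta^*\|_2\le 1$.

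Next, I would show that the LP objective decomposes coordinate-wise: conditional on $x_t=e_i$, the expected per-round revenue $h(x_t^\top\beta,x_t)=\beta_i\Pr_{y\sim D_i}[y\ge\beta_i]$ depends only on $\beta_i$, so the separated maximizer $\beta^*=(v_1^*,\ldots,v_d^*)$ is indeed the hindsight optimum. Furthermore, rounds with $x_t=e_i$ yield observations that are conditionally independent of $D_j$ for $j\ne i$, so no information transfers across subproblems; any pricing algorithm for LP can be viewed as running $d$ independent non-contextual pricing algorithms on the random subsequences $\{t:x_t=e_i\}$. By binomial concentration each subproblem receives roughly $T/d$ rounds with high probability, and applying Theorem~\ref{theorem_lower_bound_main} to each of the $d$ subproblems would yield a per-subproblem expected regret of $\tilde{\Omega}((T/d)^{2/3})$; summing over the $d$ subproblems gives the claimed $\tilde{\Omega}(d^{1/3}T^{2/3})$ total.

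The main obstacle is ensuring that each per-subproblem regret remains $\tilde{\Omega}((T/d)^{2/3})$ in absolute units despite the norm constraint $\|\beta^*\|_2\le 1$ forcing each coordinate's optimal price to shrink to $O(1/\sqrt{d})$. A naive rescaling of the bump-tower construction to this smaller price range would depress each per-subproblem regret by an extra factor of $1/\sqrt{d}$; circumventing this requires choosing the hard family so that the informational cost of identifying the bump peak is governed by the local structure of the CDF rather than by the overall absolute price scale---for instance, by allowing the CDFs' Lipschitz constants to grow like $\sqrt{d}$ to compensate for the squeezed range, so that the distinguishability arguments in the proof of Theorem~\ref{theorem_lower_bound_main} still carry the per-subproblem regret through at $(T/d)^{2/3}$ unitary magnitude. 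Verifying that this rescaling is compatible with Assumption~\ref{assump_bounded_features_parameters} and with the Lipschitz and unimodality conditions in Theorem~\ref{theorem_lower_bound_main} is the key technical check.
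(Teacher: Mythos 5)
Your core construction is exactly the paper's: draw $x_t$ uniformly from the standard basis $\{e_1,\dots,e_d\}$, observe that the benchmark and the Boolean feedback then decompose into $d$ mutually independent non-contextual pricing problems with roughly $T/d$ rounds each, apply Theorem~\ref{theorem_lower_bound_main} to each subproblem, and sum to get $\tilde{\Omega}\bigl(d\cdot(T/d)^{2/3}\bigr)=\tilde{\Omega}(d^{1/3}T^{2/3})$. Up to that point you and the paper coincide.

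The step you add on top of this --- rescaling each coordinate so that the optimal prices fit the ball $\|\beta^*\|_2\le 1$, and compensating by steepening the CDFs --- is where your proposal has a genuine gap, and it is also a point where you deviate from the paper: the paper performs no rescaling at all, planting the construction of Theorem~\ref{theorem_lower_bound_main} at unit scale in every coordinate (each per-coordinate optimal price is bounded below by the constant $b=11/20$), and it simply leaves unaddressed the resulting tension with Assumption~\ref{assump_bounded_features_parameters}, since the hindsight optimizer then has $\|\beta^*\|_2=\Theta(\sqrt d)$. Your deferred ``key technical check'' cannot be passed in the form you sketch. Shrinking the optimal-price window to scale $s=1/\sqrt d$ already forces the CDFs' Lipschitz constants to grow like $\sqrt d$ (that is precisely what rescaling the valuations does), and making them steeper still buys nothing, because the quantity that drives the regret is not the slope of the CDF but the slope of the revenue curve: $v\mapsto v\,F_{LP}(v|x)$ can increase at rate at most $F_{LP}(v|x)\le 1$ since $F_{LP}$ is non-increasing --- the very half-Lipschitz property the paper exploits, and it is scale-invariant in probability but not in revenue. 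Hence, if all candidate peaks of a subproblem must lie in a window of width $O(s)$ split into $Q$ cells, an adversary's bump confined to its cell can exceed the surrounding plateau by at most the cell width $O(s/Q)$, while the KL argument of Theorem~\ref{theorem_lower_bound_main} caps the undetectable per-round gap at $O\bigl(s\sqrt{Qd/T}\bigr)$; optimizing over $Q$ yields only $\tilde{\Omega}\bigl(s\,(T/d)^{2/3}\bigr)$ per subproblem and $\tilde{\Omega}(d^{-1/6}T^{2/3})$ in total, not $\tilde{\Omega}(d^{1/3}T^{2/3})$. So, at least within the construction framework of Theorem~\ref{theorem_lower_bound_main} that you invoke, the lost $\sqrt d$ cannot be recovered by inflating Lipschitz constants. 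To reproduce the corollary as the paper proves it, drop the rescaling and keep each coordinate's instance at unit scale; the honest thing to add (which the paper does not) is an explicit remark that the comparator in Definition~\ref{def_reg_lp} is then effectively taken over per-coordinate prices in $[0,1]$ (an $\ell_\infty$-type bound on $\beta$) rather than over the $\ell_2$ unit ball of Assumption~\ref{assump_bounded_features_parameters}.
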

\begin{proof}
	Here we construct the following LP problem: let $x_t=[0, \ldots, 0, 1, 0, \ldots, 0]^{\top}$ with only the $i_t^{\text{th}}$ element being $1$, where $i_t$ is chosen from $\{1,2,\ldots, d\}$ uniformly at random for each $t=1,2,...,T$. As a result, the problem is split into $d$-subproblems with each of them a non-feature pricing problem in $\frac{T}{d}$ rounds in expectation (since the demand function $F_{LP}(y|x)$ can be totally different and independent for different $x$'s). According to Theorem \ref{theorem_lower_bound_main}, the lower bound for this problem is $\tilde{\Omega}(d\cdot(\frac{T}{d})^{\frac23})=\tilde{\Omega}(d^{\frac13}T^{\frac23})$.
\end{proof}
\begin{corollary}[Lower bound of LV problem]
	The regret lower bound for LV problems is $\tilde{\Omega}(T^{\frac23})$, even with stochastic features and noise-distributional properties stated in Theorem \ref{theorem_lower_bound_main}.
	\label{corollary_lower_bound_lv}
\end{corollary}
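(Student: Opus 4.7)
The plan is to deduce \Cref{corollary_lower_bound_lv} by a trivial reduction that embeds the non-contextual hard instance of \Cref{theorem_lower_bound_main} into an LV instance. The idea is to render the linear component inert, so that the feature-based problem collapses exactly into the problem of learning a single unknown noise distribution from Boolean feedback.

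Concretely, I will choose $\theta^*=0\in\R^d_+$ (permitted since $\|\theta^*\|_2=0\leq1$, so \Cref{assump_bounded_features_parameters} holds) and let the features $x_t$ be drawn i.i.d.\ from any stochastic source on the $B$-ball, e.g., uniform over $\{e_1,\ldots,e_d\}$. Then $u_t=x_t^{\top}\theta^*=0$ for every $t$, so the revealed valuation is simply $y_t=N_t$, i.i.d.\ from $\mathbb{D}$, independent of $x_t$. In particular, the instantaneous revenue $g(v,u_t,F)=v(1-F(v))$ no longer depends on $x_t$; its per-round maximizer $v^\star:=\argmax_v v(1-F(v))$ is a single constant, and
\[
Reg_{LV}=\sum_{t=1}^T\bigl[v^\star(1-F(v^\star))-v_t(1-F(v_t))\bigr]
\]
coincides with the non-contextual pricing regret of the induced price sequence $\{v_t\}$. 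Since $x_t$ carries no information whatsoever about $y_t$ under this embedding, any LV algorithm yields a valid non-contextual pricing algorithm with identical expected regret, so an $o(T^{2/3-\delta})$ LV regret for some $\delta>0$ would immediately contradict \Cref{theorem_lower_bound_main}.

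The only piece of verification is that the bump-tower noise family constructed for \Cref{theorem_lower_bound_main} respects \Cref{assump_bounded_noise}. Since that family is built on a bounded interval (namely $[0,1]$), a fixed affine shift $N\mapsto N-\tfrac{1}{2}$ --- absorbed into a matching translation of the action space --- transports the support into $[-1,1]$ while preserving Lipschitzness of the CDF, unimodality of the revenue curve, and the regret lower bound up to absolute constants. I expect no genuine obstacle here: the reduction is the entire content, and its correctness rests on the two nearly trivial facts that (i) $\theta^*=0$ is a legal LV instance and (ii) injecting stochastic features is free when those features do not enter the conditional distribution of $y_t$. The same template, with $\theta^*=0$, also explains why the extra structural assumptions in the corollary (stochastic $x_t$, Lipschitz and unimodal noise distribution) do not help the learner: all of them are already in force inside the hard family produced by \Cref{theorem_lower_bound_main}.
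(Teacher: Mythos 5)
Your overall strategy---neutralize the linear component so that the LV instance collapses to the non-contextual hard family of Theorem \ref{theorem_lower_bound_main}---is exactly the paper's route (the paper realizes it with $d=1$, $x_t\equiv 1$ and $\theta^*$ equal to the midpoint of a rescaled valuation range; your $\theta^*=0$ with uninformative stochastic features is an equally legitimate way to make the linear part inert). The genuine gap is the final claim that the affine shift $N\mapsto N-\tfrac12$, ``absorbed into a matching translation of the action space,'' preserves the regret lower bound up to absolute constants. It does not: the pricing reward $v\cdot\ind(v\le y)$ is equivariant under a \emph{positive rescaling} of valuations and prices (revenue and regret scale by the same constant, which is why the paper's reduction transfers the bound), but it is \emph{not} invariant under translation, because translating the posted price changes the collected revenue itself. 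Concretely, in the hard family the revenue curve is $D(v)=v\,d(v)$ with $d(v)\ge b=\tfrac{11}{20}$ for all $v<1$, and the bump peak $v^*\in[0.75,0.80]$ beats pricing near $v=1$ by at most $(1-b)\cdot\tfrac32 C_f\approx 0.011$. After your shift the revenue at price $w$ is $w\,d(w+\tfrac12)$: posting $w$ just below $\tfrac12$ earns essentially $\tfrac b2=0.275$ for \emph{every} member of the family, whereas the shifted bump peak $w^\sharp=v^*-\tfrac12$ earns at most $(v^*-\tfrac12)D(v^*)/v^*\approx 0.21$, and indeed every price $w\le 0.35$ earns at most $\tfrac{0.35}{0.85}(b+(1-b)G_{\max})\approx 0.23$. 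Since the randomly chosen bumps live only in the region $\mu\in[\tfrac13,\tfrac23]$, the shifted revenue curves of all members coincide on the region $w>0.35$ where the optimum now lies; a learner that always posts that (known, member-independent) price has $O(1)$ regret on the entire shifted family. So the shifted family is trivially easy, no contradiction with an $o(T^{2/3-\delta})$ LV algorithm can be extracted, and the step ``lower bound preserved up to absolute constants'' is false as stated.

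The repair is immediate, and the shift was never needed in the first place: the hard family's valuations already lie in $[b,1]\subset[-1,1]$, so you may set $\theta^*=0$ and $N_t=y_t$ verbatim (Assumption \ref{assump_bounded_noise} only requires $N_t\in[-1,1]$, not zero mean or symmetric support), making the LV instance literally identical to the non-contextual one and letting the rest of your argument go through unchanged. Alternatively, follow the paper: keep the valuation distribution intact up to a \emph{positive rescaling}, take $d=1$, $x_t\equiv1$, and let $x_t^{\top}\theta^*$ name the center of the (scaled) valuation range with the noise as the deviation---a re-parametrization rather than a modification of the instance, which is precisely why it transfers the $\tilde{\Omega}(T^{\frac23})$ bound.
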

It is worth mentioning that the noise distribution is itself an (inversed) demand function on $(v-u)$, i.e., it is non-increasing as $(v-u)$ gets larger. Based on this insight, the derivation of Corollary \ref{corollary_lower_bound_lv} is straightforward: any non-feature pricing problem with bounded i.i.d. $y_t$'s can be reduced to an LV problem up to constant coefficients. In fact, suppose $y_t\in[a,b], 0\leq a<b$ in a non-feature pricing problem, and then we might define an LV problem by setting $d=1, \theta^*=\frac{a+b}{b-a}\text{ and } x_t=1, \forall t\in\mathbb{Z}_+$ since now $x_t^{\top}\theta^*+N_t\in[a,b]$. As long as the definition of LV problem does not specify the distributional properties (besides being bounded), the distribution family in the proof of Theorem \ref{theorem_lower_bound_main} can be reduced to an LV problem as well. In this way, the $\tilde{\Omega}(T^{\frac23})$ lower bounds are applicable to LV problems.

\section{NUMERICAL EXPERIMENTS}
\label{sec_num_experiments}
In this section, we conduct numerical experiments to show the validity of Linear-EXP4. We assume $d=2, B=1$ as basic parameters, and assume a Gaussian noisy valuation model i.e., $y_t = u_t + N_t$ where $N_t\sim\cN(0, \frac1{16})$ independently for all $t$. For the convenience of comparing with a fixed optimal linear policy $\beta^*$, we let $u_t = J^{-1}(x_t^{\top}\beta^*)$ for each $t$, where $J(u)=\argmax_v g(v,u, 1-\Phi_{\cN(0, \frac1{16})})$ is a greedy pricing function defined in \citet{xu2021logarithmic}\footnote{They also show the existence of $J^{-1}(v)$ by showing that $J'(u)\in(0,1)$.}. In other words, the linear price $v_t^*=x_t^{\top}\beta^*$ always maximizes the expected reward for any $t$, and we may calculate the empirical \emph{ex ante} regret (i.e., comparing the empirical performance with the maximizer of expected regret at each round) by comparing $v_t\cdot\ind(v_t\leq y_t)$ with $x_t^{\top}\beta^*\cdot\ind(x_t^{\top}\beta^*\leq y_t)$. According to Hoeffding's Inequality, the \emph{ex post} regret that we adopt for the LP problem is only $\tilde{O}(\sqrt{T})$ different from the empirical \emph{ex ante} regret. Given that the regret rate of Linear-EXP4 is $\tilde{\Theta}(T^{\frac23})$, we may ignore this difference and only show the ex ante regret in our experiments. Since the EXP-4 learner requires pre-knowledge on $T$ and is not an any-time algorithm (i.e., the cumulative regret is meaningful only at $t=T$), we execute Linear-EXP4 for a series of $T=\lfloor2^{\frac k3}\rfloor$ for $k=27, 28, \ldots, 48$. We repeat every experiment 20 times for each setting and then take an average. The results are shown in Figure \ref{fig_plotting}

\begin{figure}
	\centering
	\includegraphics[width=0.6\textwidth]{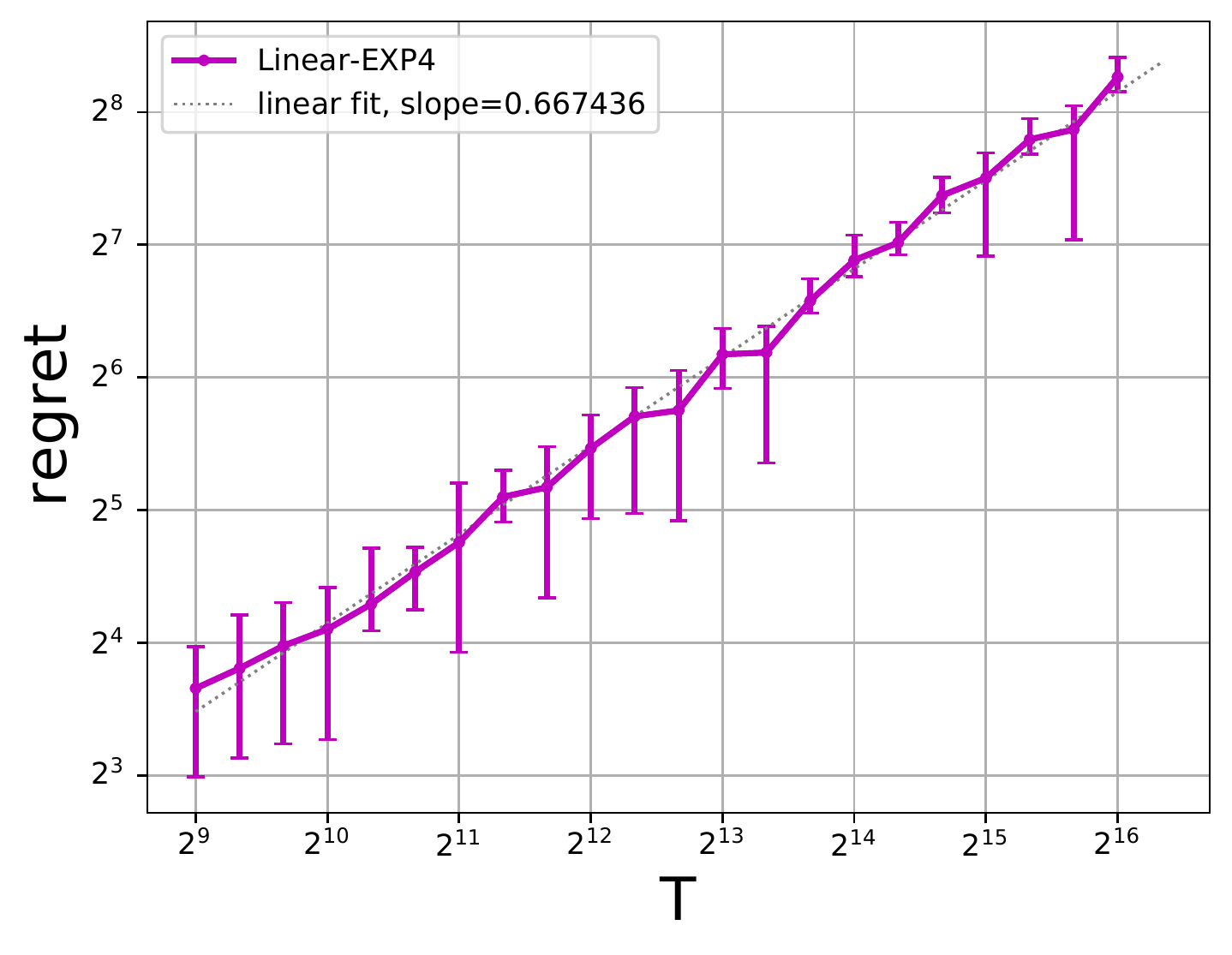}
	\caption{Regrets of Linear-EXP4 on simulated examples. The plot is on log-log scales to show the regret rate: a slope of $\alpha$ indicates an $O(T^{\alpha})$ regret. Besides, we draw error bars with 0.95 coverage. Notice that the slope of its linear fit is $0.667$, which matches the $\tilde{O}(T^{\frac23})$ regret rate in theory.}
	\label{fig_plotting}
\end{figure}


We were unable to conduct numerical experiments on D2-EXP4 due to the exponential time complexity of the EXP-4 learner along with the $2^{T^{\frac14}}$-size policy set. We provide the code of D2-EXP4 in our supplementary materials.

\section{DISCUSSION}
\label{sec_discussion}
In this section, we discuss potential extensions of this work and our conjectures on the regret of LV problems.

\paragraph{From Linear to Non-Linear}
	Both LP and LV problems are based on a linear principle of feature-price/valuation relationships, which is not reasonable in many real-world situations (for example, the price of a diamond). Based on our specifications on LP and LV problems, we may similarly define two corresponding problems: (1) We make no assumptions on the $x_t\rightarrow y_t$ mapping, but compare with the optimal policy in a parametric non-linear model space. (2) We directly assume that the $x_t\rightarrow y_t$ is a parametric non-linear function adding some unknown (and non-parametric) noise, and compare with the optimal price. We may slightly modify our Linear-EXP4 and D2-EXP4 to deal with these two problems by just replacing the linear discretized policy set with another non-linear one. However, we should be careful about any discretization involved: the $\gamma$-roundings of non-linear policy parameters do not necessarily lead to a slightly lower price (maybe either higher or much lower). Like what we designed in D2-EXP4, we still have to ensure the parametric optimal policy itself performs within a $[-O(\gamma), 0]$ range from the global optimal policy. 
\paragraph{The Minimax Regret(s) of LV} 
Existing works on solving LV have achieved various regret bounds with different assumptions. This is quite different from the linear regression problem where noise distributions do not significantly affect the result. To the best of our knowledge, we are the first to get rid of all assumptions (despite bounded-noise assumption \footnote{If the noise is neither bounded nor parametrized, then any finite-time algorithm will suffer a linear regret when the noise is very large and prices are always being accepted.}). However, we did not close the regret gap in this setting. This problem is similar to a non-feature pricing problem as we adopt the same lower bound proof in this work, but the situations are entirely different: In non-feature pricing, we aim at a fixed optimal price, and we only have to know the valuation distribution around the optimal price. However, in an LV problem, we have to approach the exact linear valuation adding an optimal \emph{increment} for each feature, and the optimal increments are \textbf{not} fixed for different valuations. As a result, we have to know the whole noise distribution. This drastically increases the hardness of LV, and we conjecture LV with a $\Theta(T^{\alpha})$ regret where $\alpha>\frac23$.  

\paragraph{Dependence on Noise Scale $R$} In this work we assume the noise $N_t\in[-1,1]$. Based on this assumption, we construct a discrete noise CDF family $\cF_{\gamma}$ whose size is $\binom{\frac3\gamma}{\frac1\gamma}$. When it changes to $N_t\in[-R, R]$ for larger $R$, the number of discrete CDF is $\binom{\frac{2R+1}\gamma}{\frac1\gamma}\leq (\frac{2R+1}\gamma)^{\frac1\gamma}$. Also, this would increase the upper bound of prices from $(B+1)$ to $(B+R)$, which would increase the number of actions by $\frac R{\gamma}$. Recall that the regret of EXP-4 is $O(\sqrt{KT\log{N}})$ where $K$ is the number of actions and $N$ is the number of policies (i.e., \# discrete $\theta$ times \# discrete CDF). Therefore, the dependence on $R$ is $O(\sqrt{R\log{R}})$.
\section{CONCLUSION}
\label{sec_conclusion}
In this work, we have studied two agnostic feature-based dynamic pricing problems: a linear pricing policy (LP) problem with no assumptions on feature-valuation mappings, and a linear noisy valuation (LV) problem with agnostic noise distributions. For the LP problem, we have presented a \emph{Linear-EXP4} algorithm whose $\tilde{O}(T^{\frac23}d^{\frac13})$ regret matches the $\tilde{\Omega}(T^{\frac23}d^{\frac13})$ lower bound up to logarithmic factors. For the LV problem, we have proposed an $\tilde{O}(T^{\frac34})$-regret algorithm \emph{D2-EXP4} along with an $\tilde{\Omega}(T^{\frac23})$ lower bound proof even with stochastic, Lipschitz and unimodal assumptions, and both of them substantially improve existing results from $O(T^{\frac23\cup(1-\alpha)})$ (with smoothness assumptions and indeterministic $\alpha$) and $\Omega(T^{\frac35})$ respectively. Both Linear-EXP4 and D2-EXP4 allow adversarial features. Besides, we have discussed the prospective generalization of this work and development of future research in feature-based dynamic pricing.


\subsubsection*{Acknowledgements}
The work is partially supported by the Adobe Data Science Award and a start-up grant from the UCSB Department of Computer Science.


\bibliography{ref_log}

\begin{thebibliography}{}

\bibitem[Agarwal et~al., 2014]{agarwal2014taming}
Agarwal, A., Hsu, D., Kale, S., Langford, J., Li, L., and Schapire, R. (2014).
\newblock Taming the monster: A fast and simple algorithm for contextual
  bandits.
\newblock In {\em International Conference on Machine Learning (ICML-14)},
  pages 1638--1646.

\bibitem[Agrawal, 1995]{agrawal1995continuum}
Agrawal, R. (1995).
\newblock The continuum-armed bandit problem.
\newblock {\em SIAM journal on control and optimization}, 33(6):1926--1951.

\bibitem[Auer et~al., 2002]{auer2002nonstochastic}
Auer, P., Cesa-Bianchi, N., Freund, Y., and Schapire, R.~E. (2002).
\newblock The nonstochastic multiarmed bandit problem.
\newblock {\em SIAM Journal on Computing}, 32(1):48--77.

\bibitem[Auer et~al., 2007]{auer2007improved}
Auer, P., Ortner, R., and Szepesv{\'a}ri, C. (2007).
\newblock Improved rates for the stochastic continuum-armed bandit problem.
\newblock In {\em International Conference on Computational Learning Theory},
  pages 454--468. Springer.

\bibitem[Ban and Keskin, 2021]{ban2021personalized}
Ban, G.-Y. and Keskin, N.~B. (2021).
\newblock Personalized dynamic pricing with machine learning: High-dimensional
  features and heterogeneous elasticity.
\newblock {\em Management Science}, 67(9):5549--5568.

\bibitem[Bart{\'o}k et~al., 2014]{bartok2014partial}
Bart{\'o}k, G., Foster, D.~P., P{\'a}l, D., Rakhlin, A., and Szepesv{\'a}ri, C.
  (2014).
\newblock Partial monitoring—classification, regret bounds, and algorithms.
\newblock {\em Mathematics of Operations Research}, 39(4):967--997.

\bibitem[Besbes and Zeevi, 2009]{besbes2009dynamic}
Besbes, O. and Zeevi, A. (2009).
\newblock Dynamic pricing without knowing the demand function: Risk bounds and
  near-optimal algorithms.
\newblock {\em Operations Research}, 57(6):1407--1420.

\bibitem[Besbes and Zeevi, 2012]{besbes2012blind}
Besbes, O. and Zeevi, A. (2012).
\newblock Blind network revenue management.
\newblock {\em Operations research}, 60(6):1537--1550.

\bibitem[Besbes and Zeevi, 2015]{besbes2015surprising}
Besbes, O. and Zeevi, A. (2015).
\newblock On the (surprising) sufficiency of linear models for dynamic pricing
  with demand learning.
\newblock {\em Management Science}, 61(4):723--739.

\bibitem[Chen et~al., 2019]{chen2019nonparametric}
Chen, Q., Jasin, S., and Duenyas, I. (2019).
\newblock Nonparametric self-adjusting control for joint learning and
  optimization of multiproduct pricing with finite resource capacity.
\newblock {\em Mathematics of Operations Research}, 44(2):601--631.

\bibitem[Cohen et~al., 2020]{cohen2020feature_journal}
Cohen, M.~C., Lobel, I., and Paes~Leme, R. (2020).
\newblock Feature-based dynamic pricing.
\newblock {\em Management Science}, 66(11):4921--4943.

\bibitem[Fan et~al., 2021]{fan2021policy}
Fan, J., Guo, Y., and Yu, M. (2021).
\newblock Policy optimization using semiparametric models for dynamic pricing.
\newblock {\em arXiv preprint arXiv:2109.06368}.

\bibitem[Javanmard and Nazerzadeh, 2019]{javanmard2019dynamic}
Javanmard, A. and Nazerzadeh, H. (2019).
\newblock Dynamic pricing in high-dimensions.
\newblock {\em The Journal of Machine Learning Research}, 20(1):315--363.

\bibitem[Kleinberg, 2004]{kleinberg2004nearly}
Kleinberg, R. (2004).
\newblock Nearly tight bounds for the continuum-armed bandit problem.
\newblock {\em Advances in Neural Information Processing Systems}, 17:697--704.

\bibitem[Kleinberg and Leighton, 2003]{kleinberg2003value}
Kleinberg, R. and Leighton, T. (2003).
\newblock The value of knowing a demand curve: Bounds on regret for online
  posted-price auctions.
\newblock In {\em IEEE Symposium on Foundations of Computer Science (FOCS-03)},
  pages 594--605. IEEE.

\bibitem[Krishnamurthy et~al., 2021]{krishnamurthy2020contextual}
Krishnamurthy, A., Lykouris, T., Podimata, C., and Schapire, R. (2021).
\newblock Contextual search in the presence of irrational agents.
\newblock In {\em Proceedings of the 53rd Annual ACM SIGACT Symposium on Theory
  of Computing (STOC-21)}, pages 910--918.

\bibitem[Langford and Zhang, 2007]{langford2007epoch}
Langford, J. and Zhang, T. (2007).
\newblock The epoch-greedy algorithm for contextual multi-armed bandits.
\newblock In {\em Advances in Neural Information Processing Systems (NIPS-07)},
  pages 817--824.

\bibitem[Leme and Schneider, 2018]{leme2018contextual}
Leme, R.~P. and Schneider, J. (2018).
\newblock Contextual search via intrinsic volumes.
\newblock In {\em 2018 IEEE 59th Annual Symposium on Foundations of Computer
  Science (FOCS-18)}, pages 268--282. IEEE.

\bibitem[Liu et~al., 2021]{liu2021optimal}
Liu, A., Leme, R.~P., and Schneider, J. (2021).
\newblock Optimal contextual pricing and extensions.
\newblock In {\em Proceedings of the 2021 ACM-SIAM Symposium on Discrete
  Algorithms (SODA-21)}, pages 1059--1078. SIAM.

\bibitem[Lobel et~al., 2018]{lobel2018multidimensional}
Lobel, I., Leme, R.~P., and Vladu, A. (2018).
\newblock Multidimensional binary search for contextual decision-making.
\newblock {\em Operations Research}, 66(5):1346--1361.

\bibitem[Luo et~al., 2021]{luo2021distribution}
Luo, Y., Sun, W.~W., et~al. (2021).
\newblock Distribution-free contextual dynamic pricing.
\newblock {\em arXiv preprint arXiv:2109.07340}.

\bibitem[Mao et~al., 2018]{mao2018contextual}
Mao, J., Leme, R.~P., and Schneider, J. (2018).
\newblock Contextual pricing for lipschitz buyers.
\newblock In {\em NeurIPS}, pages 5648--5656.

\bibitem[Wang et~al., 2021a]{wang2021dynamic}
Wang, H., Talluri, K., and Li, X. (2021a).
\newblock On dynamic pricing with covariates.
\newblock {\em arXiv preprint arXiv:2112.13254}.

\bibitem[Wang et~al., 2021b]{wang2021multimodal}
Wang, Y., Chen, B., and Simchi-Levi, D. (2021b).
\newblock Multimodal dynamic pricing.
\newblock {\em Management Science}.

\bibitem[Wang et~al., 2014]{wang2014close}
Wang, Z., Deng, S., and Ye, Y. (2014).
\newblock Close the gaps: A learning-while-doing algorithm for single-product
  revenue management problems.
\newblock {\em Operations Research}, 62(2):318--331.

\bibitem[Xu and Wang, 2021]{xu2021logarithmic}
Xu, J. and Wang, Y.-X. (2021).
\newblock Logarithmic regret in feature-based dynamic pricing.
\newblock {\em Advances in Neural Information Processing Systems}, 34.

\end{thebibliography}

\appendix

\onecolumn
{\Huge Appendix}

\section{Proof of Regret of D2-EXP4: Theorem \ref{theorem_upper_LV}}
\label{appendix_proof_upper_bound_of_lv}

\begin{proof}
	For any specific LV problem that is defined with linear parameter $\theta^*$ and noise CDF $F$, we define another parameter $\hat{\theta}^*:=\lceil\theta^*\rceil_{\Delta}$ and another CDF functions $\hat{F}$:

	\begin{equation*}
	\begin{aligned}
	\hat{F}(x)=& \lfloor{F(x)}\rfloor_{\gamma}\text{ when } x=i\cdot\gamma\text{ for }i\in\mathbb{Z},\text{ and }\text{ linearly connecting }\\
	&\hat{F}({i}{\gamma})\text{ with } \hat{F}({i+1}{\gamma})\text{ when } x\in({i}{\gamma}, (i+1){\gamma}).
	\end{aligned}
	\label{equ_tilde_f_and_hat_f_and_hat_theta_star}
	\end{equation*}
	
	Notice that $\hat{F}\in\cF_{\gamma}, \hat{\theta}^*\in\Omega_{\Delta, d}$, and our goal is to prove that $\pi(x; \hat{\theta}^*, \hat{F})$ is good enough to mach the regret. With these two definitions, we might furthermore define a few amounts: $\hat{u} = x^{\top}\hat{\theta}^*, w^*(u)=\argmax_w g(u+w, u, F), \hat{w}^*(u)=\argmax_w g(u+w, u, F), \hat{w}(\hat{u})=\argmax_w g(\hat{u}+w, \hat{u}, \hat{F})$. Therefore, the price our algorithm proposed for feature $x$ is $\hat{v}(x) = \lfloor\hat{u}\rfloor_{\gamma}-(B+1)\gamma+\lfloor\hat{w}(\hat{u})\rfloor_{\gamma}$, and our goal is to prove that $g(\hat{v}, u, F)\geq g(u+w^*(u), u, F) - C\cdot\gamma$ for some constant $C$. 
	Since $\hat{\theta}^*:=\lceil\theta^*\rceil_{\Delta}$, we have $\|\theta^*\|_2\leq\|\hat{\theta}^*\|_2\leq\|\theta^*\|_2+\Delta\sqrt{d}=\|\theta^*\|_2+\gamma$ and thus $u-\gamma\leq\hat{u}-\gamma\leq\lfloor\hat{u}\rfloor_{\gamma}\leq\hat{u}\leq u+B\gamma$. Based on this, we may get rid of $\lfloor\hat{u}\rfloor_{\gamma}$ as follows:
	\begin{small}
		\begin{equation*}
		\begin{aligned}
		&g(\lfloor\hat{u}\rfloor_{\gamma}-(B+1)\gamma+\lfloor\hat{w}(\hat{u})\rfloor_{\gamma}, u, F)\\
		=&(\lfloor\hat{u}\rfloor_{\gamma}-(B+1)\gamma+\lfloor\hat{w}(\hat{u})\rfloor_{\gamma})\cdot(1-F(\lfloor\hat{u}\rfloor_{\gamma}-(B+1)\gamma+\lfloor\hat{w}(\hat{u})\rfloor_{\gamma}-u))\\
		\geq&(\lfloor\hat{u}\rfloor_{\gamma}+\lfloor\hat{w}(\hat{u})\rfloor_{\gamma})\cdot(1-F(\lfloor\hat{u}\rfloor_{\gamma}-(u+(B+1)\gamma)+\lfloor\hat{w}(\hat{u})\rfloor_{\gamma}))-(B+1)\gamma\\
		\geq&(u+\lfloor\hat{w}(\hat{u})\rfloor_{\gamma})(1-F(\lfloor\hat{w}(\hat{u})\rfloor_{\gamma}))-(B+2)\gamma.\\
		\end{aligned}
		\end{equation*}
	\end{small}
	
	Now we target at $\lfloor\hat{w}(\hat{u})\rfloor_{\gamma}$ that occurs in both of the price term and the probability term, and we will get rid of it by two steps. Since $\hat{F}({i}{\gamma})\leq F({i}{\gamma})\leq\hat{F}({i}{\gamma})+\gamma$, we have the first step like:
	\begin{equation*}
	\begin{aligned}
	&(u+\lfloor\hat{w}(\hat{u})\rfloor_{\gamma})(1-F(\lfloor\hat{w}(\hat{u})\rfloor_{\gamma}))\\
	\geq&(u+\lfloor\hat{w}(\hat{u})\rfloor_{\gamma})(1-\hat{F}(\lfloor\hat{w}(\hat{u})\rfloor_{\gamma})-\gamma)\\
	\geq&(u+\lfloor\hat{w}(\hat{u})\rfloor_{\gamma})(1-\hat{F}(\lfloor\hat{w}(\hat{u})\rfloor_{\gamma}))-(B+1)\gamma\\
	\geq&(u+\lfloor\hat{w}(\hat{u})\rfloor_{\gamma})(1-\hat{F}(\hat{w}(\hat{u})))-(B+1)\gamma.\\
	\end{aligned}
	\end{equation*}
	Here the second inequality comes from the $(B+1)$ natural bound of any price. Again, we apply $u\geq\hat{u}-B\gamma$ and get the second step:
	
	\begin{equation*}
	\begin{aligned}
	&(u+\lfloor\hat{w}(\hat{u})\rfloor_{\gamma})(1-\hat{F}(\hat{w}(\hat{u})))\\
	\geq &(\hat{u}-B\gamma+\hat{w}(\hat{u})-\gamma)(1-\hat{F}(\hat{w}(\hat{u})))\\
	\geq &(\hat{u}+\hat{w}(\hat{u}))(1-\hat{F}(\hat{w}(\hat{u})))-(B+1)\gamma\\
	=&g(\hat{u}+\hat{w}(\hat{u}), \hat{u}, \hat{F})-(B+1)\gamma.
	\end{aligned}
	\end{equation*}
	
	Now, there are only $\hat{\cdot}$'s instead of $\gamma-$roundings, and we will get rid of those $\hat{\cdot}$'s within some $C\cdot\gamma$ errors. According to the definition of $\hat{w}(\hat{u})$ that it optimizes $g(\hat{u}+w, \hat{u}, \hat{F})$, we further have:
	\begin{equation*}
	\begin{aligned}
	&g(\hat{u}+\hat{w}(\hat{u}), \hat{u}, \hat{F})\\
	\geq&g(\hat{u}+\hat{w}^*(u), \hat{u}, \hat{F})\\
	=&(\hat{u}+\hat{w}^*(u))(1-\hat{F}(\hat{w}^*(u)))\\
	\geq&(u+\hat{w}^*(u))(1-\hat{F}(\hat{w}^*(u)))\\
	=&g(u+\hat{w}^*(u), u, \hat{F})
	\end{aligned}
	\end{equation*}
	Finally, according to the definition of $\hat{w}^*(u)$ that it optimizes $g(u+w, u, \hat{F})$, we have:
	\begin{equation*}
	\begin{aligned}
	&g(u+\hat{w}^*(u), u, \hat{F})\\
	\geq&g(u+\lfloor{w}^*(u)\rfloor_{\gamma}, u, \hat{F})\\
	=&(u+\lfloor{w}^*(u)\rfloor_{\gamma})(1-\hat{F}(\lfloor{w}^*(u)\rfloor_{\gamma}))\\
	\geq&(u+\lfloor{w}^*(u)\rfloor_{\gamma})(1-F(\lfloor{w}^*(u)\rfloor_{\gamma}))\\
	\geq&(u+w^*(u)-\gamma)(1-F(w^*(u)))\\
	\geq&g(u+w^*(u), u, F)-\gamma.
	\end{aligned}
	\end{equation*}
	Here the fourth line is again due to $\hat{F}({i}{\gamma})\leq F({i}{\gamma})\leq\hat{F}({i}{\gamma})+\gamma$ and the non-decreasing property of $F$. We make a tricky use of $\lfloor\cdot\rfloor_{\gamma}$ as a ``ladder'' helping us climb between $F$ and $\hat{F}$, and the ladders only emerge on those $i\gamma$ places as $i\in\mathbb{Z}$. Therefore, we have $g(\hat{v}, u, F)\geq g(u+w^*(u), u, F) - (3B+5)\cdot\gamma$.
	Since $\gamma = T^{-\frac14}$, this would upper bounds the optimality error up to $O(T\cdot\gamma)=O(T^{\frac34})$. Also, the EXP-4 agent would cause a regret of $O(\sqrt{T|A|\log{\Pi^{LV}_{\Delta, \gamma}}}) = O(\sqrt{\frac{T}{\gamma^2}+\frac{Td\log{dT}}{\gamma}})=O(T^{\frac34}+T^{\frac58}d^{\frac12}\log{dT})$. This completes the proof. 
\end{proof}

\section{Proof of Lower Bound: Theorem \ref{theorem_lower_bound_main}}
\label{appendix_proof_lower_bound}
Before the proof begins, we make some necessary definitions.
First of all, define a \emph{bump function} as following:
\begin{definition}[Bump function]
	For ${v}\in\R+$, we define
	\begin{equation*}
	B({v})=\left\{
	\begin{array}{lcl}
	0 &   & {v}\in(-\infty,0]\cup[1,+\infty)\\
	\exp\{\frac{1}{(3{v}-1)^2-1}\} \qquad\quad &   & {v}\in (0,1/3) \\
	1 &   & {v}\in[1/3, 2/3]\\
	\exp\{\frac{1}{(3{v}-2)^2-1}\}&  & {v}\in(2/3,1)\\
	0 &   &{v}\in[1,+\infty)
	\end{array}
	\right.
	\end{equation*}
	as a basic bump function. Then we define a \emph{rescaled bump function}:
	\begin{equation*}
	B_{[a,b]}({v})=B(\frac{{v}-a}{b-a}).
	\end{equation*}
	\label{def_bump_fc}
\end{definition}
Here we present a lemma on the Lipschitzness of $B({v})$:
\begin{lemma}[Lipschitz continuity of $B({v})$]
	$B({v})$ is 6-Lipschitz, i.e., $|B'({v})|\leq 6$. Also, $|B'_{[a,b]}({v})|=|\frac{1}{b-a}B'(\frac{{v}-a}{b-a})|\leq\frac{6}{b-a}$.
	\label{lemma_lip_B}
\end{lemma}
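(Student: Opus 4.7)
The plan is to verify $|B'(v)|\leq 6$ piecewise, exploiting the structure of the definition. On the three constant pieces $(-\infty, 0]$, $[1/3, 2/3]$, and $[1, +\infty)$ the derivative vanishes trivially, and by the reflection symmetry $B(v) = B(1-v)$ built into the two exponential pieces, it suffices to bound $|B'|$ on $(0, 1/3)$; the bound on $(2/3, 1)$ then follows by the identical argument.

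On $(0, 1/3)$, the chain rule gives
\begin{equation*}
B'(v) \;=\; \exp\!\left\{\frac{1}{(3v-1)^2 - 1}\right\} \cdot \frac{-6(3v-1)}{\bigl((3v-1)^2 - 1\bigr)^2}.
\end{equation*}
The natural move is to substitute $u = 3v-1 \in (-1,0)$ and $s = 1/(1-u^2) \in [1,\infty)$, which rewrites the absolute value compactly as $|B'(v)| = 6|u|\, s^2 e^{-s} \leq 6\, s^2 e^{-s}$. A one-variable calculus check shows $\sup_{s \geq 0} s^2 e^{-s} = 4 e^{-2}$, attained at $s=2$, so $|B'(v)| \leq 24/e^{2} < 6$ on the interior of $(0, 1/3)$, with plenty of slack. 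The rescaled statement then follows instantly: differentiating $B_{[a,b]}(v) = B\!\left(\tfrac{v-a}{b-a}\right)$ with the chain rule gives $B'_{[a,b]}(v) = (b-a)^{-1} B'\!\left(\tfrac{v-a}{b-a}\right)$, so $|B'_{[a,b]}(v)| \leq 6/(b-a)$.

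The main technical point requiring care is the matching of the exponential pieces to the constant plateau at the transition points $v \in \{0, 1/3, 2/3, 1\}$: one must verify that the one-sided derivatives from the interior agree with (or at least remain bounded by the same constant as) those from the adjacent constant pieces. In the substitution above this is transparent, since $v \to 0^+$ sends $s \to \infty$ and the decay $e^{-s}$ dominates, while $v \to 1/3^-$ sends $|u| \to 0$ and kills the prefactor directly. Hence $B'$ extends continuously to zero at every transition point, and the pointwise bound $|B'(v)| \leq 6$ holds globally on $\mathbb{R}$, justifying the stated Lipschitz constants for both $B$ and $B_{[a,b]}$.
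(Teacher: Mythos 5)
Your proof is correct and follows essentially the same route as the paper: differentiate piecewise, substitute $s=1/(1-(3v-1)^2)$ (the paper's $t_1$), and bound $s^2e^{-s}$ by a constant times the prefactor $6|3v-1|\leq 6$. The only differences are cosmetic refinements --- you invoke the symmetry $B(v)=B(1-v)$ to treat one exponential piece instead of two, obtain the slightly sharper constant $24/e^2<6$ via the maximum of $s^2e^{-s}$ at $s=2$ (the paper only uses $t^2e^{-t}\leq 1$ for $t>1$), and explicitly verify the one-sided derivatives vanish at the transition points.
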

\begin{proof}
	According to Definition \ref{def_bump_fc}, we have:
	\begin{equation*}
	B'({v})=\left\{
	\begin{array}{lcl}
	0 &   & {v}\in(-\infty,0]\\
	-\frac{1}{((3{v}-1)^2-1)^2}\cdot6(3{v}-1)\exp\{\frac{1}{(3{v}-1)^2-1}\} \qquad\quad &   & {v}\in (0,1/3) \\
	0 &   & {v}\in[1/3, 2/3]\\
	-\frac{1}{((3{v}-2)^2-1)^2}\cdot6(3{v}-2)\exp\{\frac{1}{(3{v}-2)^2-1}\}&  & {v}\in(2/3,1)\\
	0 &   &{v}\in[1,+\infty)
	\end{array}
	\right.
	\end{equation*}
	Now we propose a lemma:
	\begin{lemma}
		For $t > 1$, we have $\frac{t^2}{e^t}\leq 1$.
	\end{lemma}
	In fact, for both $1<t\leq\sqrt{e}$ and $t\geq 2$, the inequality is trivial. For $t\in(\sqrt{e},2)$, we have $ln(e^t)>\ln(e^{\sqrt{e}})=\sqrt{e}\cdot1>1.6>2\times 0.7>2\ln{2}>2\ln{t}=\ln(t^2)$.
	
	Now we denote $t_1 = -\frac{1}{(3{v}-1)^2-1}, t_2 = -\frac{1}{(3{v}-2)^2-1}$, and we know that $t_1>1$ for ${v}\in(0,\frac13)$ and $t_2>1$ for ${v}\in(\frac23,1)$
	\begin{equation*}
	B'({v})=\left\{
	\begin{array}{lcl}
	0 &   & {v}\in(-\infty,0]\\
	-t_1^2\cdot6(3{v}-1)\exp\{-t_1\} \qquad\quad &   & {v}\in (0,1/3) \\
	0 &   & {v}\in[1/3, 2/3]\\
	-t_2^2\cdot6(3{v}-2)\exp\{-t_2\}&  & {v}\in(2/3,1)\\
	0 &   &{v}\in[1,+\infty)
	\end{array}
	\right.
	\end{equation*}
	Given the lemma above, we can immediately see that $-6\leq B'({v})\leq6$. This ends the proof of Lemma \ref{lemma_lip_B}.
\end{proof}

Secondly, we define a series of intervals $[0,1]=[a_0,b_0]\supset[a_1,b_1]\supset\ldots\supset[a_{k}, b_{k}]\supset\ldots$, where $b_k=a_k+w_k$, $w_{k}=3^{-k!}$. Notice that $w_{k}$ shrinks even faster than exponential series. Now we describe how to choose $[a_k, b_k]$ from $[a_{k-1}, b_{k-1}]$: We divide the range $[a_{k-1}+\frac{w_{k-1}}{3}, b_{k-1}+\frac{w_{k-1}}{3}]$ into $Q_k=\frac{w_{k-1}}{3w_{k}}$ sub-intervals of the same length $w_k$, and then we pick one of these sub-intervals uniformly at random and denote it as $[a_k, b_k]$. It is trivial to see that $[a_1, b_1]=[\frac13,\frac23]$, $[a_2, b_2]=[\frac49, \frac59]$.

Thirdly, we define a function:
\begin{equation}
f({v}):=C_f\cdot\sum_{k=0}^{\infty}w_k\cdot B_{[a_k, b_k]}({v}),
\label{def_eqn_f}
\end{equation}
where $C_f>0$ is a constant which we will determine later. There are a few properties of $f({v})$ shown in the following lemma:
\begin{lemma}
	Define $f({v})$ as Equation \ref{def_eqn_f}, and we have:
	\begin{enumerate}
		\item There exists a unique ${v}^{*}\in[0,1]$ such that $f({v}^{*})=\max_{{v}\in[0,1]}f({v})$. In specific, ${v}^{*}=\Cap_{k=1}^{\infty}[a_k, b_k]$.
		\item $f({v})$ is unimodal.
		\item For any ${v}\in[0,1]$, there exists at most one $k$, such that $B'_{[a_k,b_k]}({v})\neq 0$.
		\item $f({v})\leq\frac32C_f$.
	\end{enumerate}
	\label{lemma_f_property}
\end{lemma}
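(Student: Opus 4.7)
The plan is to handle the four claims in an order that leverages their mutual dependence: I would prove (3) first as a purely structural fact about where $B'_{[a_k,b_k]}$ can be nonzero, then use (3) together with the nested-intervals construction to obtain (4), and finally bootstrap to (1) and (2).

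First I would verify (3). The derivative $B'_{[a_k,b_k]}$ vanishes off the two ``slope'' sub-intervals $(a_k,a_k+w_k/3)$ and $(a_k+2w_k/3,b_k)$. By construction $[a_{k+1},b_{k+1}]$ lies inside the middle third $[a_k+w_k/3,a_k+2w_k/3]$ of $[a_k,b_k]$, so iterating gives that for every $j>k$, $[a_j,b_j]\subseteq[a_{k+1},b_{k+1}]$ is disjoint from the slopes of $B_{[a_k,b_k]}$. Hence any ${v}$ in a slope of $B_{[a_k,b_k]}$ is (i) outside $[a_j,b_j]$ for all $j>k$, so $B'_{[a_j,b_j]}({v})=0$ there, and (ii) in the middle third of $[a_j,b_j]$ for all $j<k$ (because it lies inside $[a_k,b_k]$, which by iteration lies in the middle third of $[a_j,b_j]$), so again $B'_{[a_j,b_j]}({v})=0$.

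For (4), I would fix ${v}\in[0,1]$ and let $K({v})\in\{0,1,\dots\}\cup\{\infty\}$ be the largest $k$ with ${v}\in[a_k,b_k]$. The same nesting observation gives $B_{[a_k,b_k]}({v})=1$ for $k<K({v})$ (middle-third case), $B_{[a_k,b_k]}({v})\in[0,1]$ at $k=K({v})$, and $B_{[a_k,b_k]}({v})=0$ for $k>K({v})$. Therefore
\[f({v})\le C_f\sum_{k=0}^\infty w_k\le C_f\Bigl(1+\sum_{k=1}^\infty 3^{-k}\Bigr)=\tfrac32 C_f,\]
using $w_0=1$ and $w_k=3^{-k!}\le 3^{-k}$ for $k\ge1$. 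For (1) and (2), existence of ${v}^{*}=\bigcap_{k=0}^\infty[a_k,b_k]$ follows from the nested-interval theorem since $w_k\to0$. At ${v}^{*}$ every $B_{[a_k,b_k]}({v}^{*})=1$, so the first inequality above is tight. For any ${v}\ne{v}^{*}$ the index $K=K({v})$ is finite, and
\[f({v}^{*})-f({v})\ge C_f\sum_{k>K}w_k>0,\]
proving uniqueness of the maximizer. Unimodality then drops out of (3): $f'({v})$ reduces to a single term $C_f w_k B'_{[a_k,b_k]}({v})$ for at most one $k$; since ${v}^{*}\in[a_{k+1},b_{k+1}]\subseteq[a_k+w_k/3,a_k+2w_k/3]$, the ascending slope of $B_{[a_k,b_k]}$ lies entirely to the left of ${v}^{*}$ and the descending slope entirely to its right, so $f'\ge0$ on $[0,{v}^{*}]$ and $f'\le0$ on $[{v}^{*},1]$.

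The main obstacle is bookkeeping rather than any deep inequality. In the uniqueness step of (1) one must separate the case ``${v}$ is in the middle third of $[a_K,b_K]$ but not in $[a_{K+1},b_{K+1}]$'' (strict deficit coming purely from the tail $\sum_{k>K}w_k$) from the case ``${v}$ is in a slope of $[a_K,b_K]$'' (additional deficit from $B_{[a_K,b_K]}({v})<1$); and in (2) one must verify that ${v}^{*}$ strictly separates the ascending and descending slopes of every active $B_{[a_k,b_k]}$. Both reductions unwind the single inclusion ${v}^{*}\in[a_{k+1},b_{k+1}]\subseteq[a_k+w_k/3,a_k+2w_k/3]$, which is really the only geometric input the lemma uses.
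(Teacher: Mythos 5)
Your proof is correct and follows essentially the same route as the paper's: the only geometric input in both is that each $[a_{k+1},b_{k+1}]$ sits inside the flat middle third of $[a_k,b_k]$, which you use for (3), for the tail bound $\sum_k w_k\le \frac32$ in (4), and for locating $v^*$. The differences are cosmetic — you get unimodality from the sign of the (locally single-term) derivative where the paper simply notes each summand is non-decreasing on $[0,v^*]$ and non-increasing on $[v^*,1]$, and your strict tail-deficit argument for uniqueness of the maximizer is in fact slightly more complete than the paper's one-line treatment.
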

\begin{proof}
	To prove 1, we first see that ${v}^{*}\in[a_k, b_k], k=1,2,\ldots$. Notice that $\lim_{k\rightarrow\infty}a_k$ exists (since $\{a_k\}_{k=0}^{\infty}$ is increasing and upper bounded) and that $\lim_{k\rightarrow\infty}(b_k-a_k)=\lim_{k\rightarrow\infty}3^{-k!}=0$. Therefore, $\Cap_{k=1}^{\infty}[a_k, b_k]$ is a unique real number within $[\frac{1}{3},\frac{2}{3}]$.
	
	To prove 2, notice that every $B_{[a_k, b_k]}({v})$ is non-decreasing in $[0,{v}^{*}]$ and non-increasing in $[{v}^{*}, 1]$.
	
	To prove 3, consider the case when $B'_{[a_k, b_k]}({v})\neq0$, and we know that: (1)${v}\in[a_k, b_k]\subset[a_{k-1}+\frac{w_{k-1}}{3}, b_{k-1}-\frac{w_{k-1}}3]\subset[a_{k-2}+\frac{w_{k-2}}{3}, b_{k-2}-\frac{w_{k-2}}3]\subset\ldots\subset[a_0+\frac{w_0}{3}, b_0-\frac{w_0}3]=[\frac13,\frac23]$. Since $B_{[a_j, b_j]}'({v})=0, {v}\in[a_{j}+\frac{w_{j}}{3}, b_{j}-\frac{w_j}3]$, we know that $B_{[a_j, b_j]}'({v})=0, j=0,1,\ldots, k-1$. (2) ${v}\notin[a_{k+1}, b_{k+1}]\supset[a_{k+2}, b_{k+2}]\supset\ldots$, and we know that $B_{[a_i,b_i]}'({v})=0, i=k+1, k+2, \ldots$.

	To prove 4, just notice that $B({v})\leq 1$ and thus $f({v})\leq C_f\cdot\sum_{k=0}^{\infty}3^{-k!}\leq C_f\cdot\sum_{k=0}^{\infty}3^{-k!}=\frac32C_f$.
	
\end{proof}

According to Lemma \ref{lemma_f_property} Property 3, we have:
\begin{equation*}
\begin{aligned}
|f'({v})|&\leq C_f\max_{{v}\in[a_k, b_k], k=1,2,\ldots}|w_k\cdot B'_{[a_k, b_k]}({v})|\\
&= C_f\max_{{v}\in[a_k, b_k],k=1,2,\ldots}|w_k\cdot B'(\frac{{v}-a_k}{b_k-a_k})\cdot\frac{1}{w_k}|\\
&\leq C_f\max_{y}|B'(y)|\\
&\leq 6C_f
\end{aligned}
.
\end{equation*}

This holds for any ${v}\in[0,{v}^*)\cup({v}^*, 1]$. Now we define another function $G({v})$\footnote{Here G stands for ``gain'', which is different from the revenue curve to be introduced later.}:
\begin{equation}
G({v})=1-\frac{1}{f({v})+1}, {v}\in[0,1].
\label{def_eqn_g}
\end{equation}
According to Lemma \ref{lemma_f_property} that reveals the properties of $f({v})$, we have a similar lemma on $G({v})$:

\begin{lemma}
	Define $G({v})$ as Equation \ref{def_eqn_g}, and we have the following properties:
	\begin{enumerate}
		\item $G(0)=0, G(1)=0, 0<G({v})<1 for {v}\in(0,1)$.
		\item $G({v})$ is unimodal in $[0,1]$.
		\item $G'({v})=\frac{f'({v})}{(f({v})+1)^2}\Rightarrow|G'({v})|\leq 6C_f$.
	\end{enumerate}
	\label{lemma_g_property}
\end{lemma}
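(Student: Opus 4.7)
The plan is to lift each of the three claims from the analogous property of $f$ through the composition $G = \phi \circ f$, where $\phi(y) := 1 - 1/(y+1)$ is smooth and strictly increasing on $[0,\infty)$ with $\phi(0)=0$, $\phi(y) < 1$ for every finite $y$, and derivative $\phi'(y) = 1/(y+1)^2 \in (0,1]$. Because $\phi$ is a monotone smooth transformation that fixes $0$, all of the claimed properties of $G$ should follow with very little extra work from Lemma \ref{lemma_f_property}.

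For claim 1, I would first verify $f(0) = f(1) = 0$: the base bump satisfies $B(0) = B(1) = 0$ by definition, so $B_{[0,1]}$ vanishes at both endpoints, and for every $k \geq 1$ the interval $[a_k, b_k]$ lies strictly inside $[1/3, 2/3] \subset (0,1)$, which forces $B_{[a_k, b_k]}(0) = B_{[a_k, b_k]}(1) = 0$. Applying $\phi$ then yields $G(0) = G(1) = 0$. For $v \in (0,1)$, positivity of the base bump gives $f(v) \geq C_f w_0 B(v) > 0$, while property 4 of Lemma \ref{lemma_f_property} gives $f(v) \leq \tfrac{3}{2} C_f < \infty$, so $0 < G(v) < 1$. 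Claim 2 is immediate from strict monotonicity of $\phi$: the composition $G = \phi \circ f$ inherits the up-then-down behavior of $f$ on $[0, v^*] \cup [v^*, 1]$, with the same mode $v^* = \bigcap_{k \geq 1} [a_k, b_k]$.

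For claim 3, the chain rule gives $G'(v) = \phi'(f(v)) f'(v) = f'(v)/(f(v)+1)^2$, which is exactly the stated formula. Since $f \geq 0$ forces $(f(v)+1)^2 \geq 1$, the Lipschitz estimate $|f'(v)| \leq 6 C_f$ derived immediately after Lemma \ref{lemma_f_property} transfers directly to $|G'(v)| \leq 6 C_f$. The only subtle point, which I would flag as the main thing worth care, is that $f'(v)$ is defined pointwise only on $[0,1] \setminus \{v^*\}$: by property 3 of Lemma \ref{lemma_f_property}, at each $v \neq v^*$ at most one rescaled bump contributes a nonzero derivative and each bump is smooth, whereas at $v^*$ infinitely many bumps accumulate. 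This does not affect the stated bound, which is a pointwise estimate at points where $G'$ exists and is all that is needed by the later KL-divergence computations in the lower bound proof.
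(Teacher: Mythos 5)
Your proof is correct and follows exactly the route the paper intends: the paper simply declares this lemma's proof trivial, the implicit argument being precisely your observation that $G=\phi\circ f$ with $\phi(y)=1-1/(y+1)$ monotone and $\phi'(y)=1/(y+1)^2\le 1$, so all three claims transfer from Lemma \ref{lemma_f_property} and the bound $|f'(v)|\le 6C_f$. Your caveat about differentiability only off $v^*$ is also consistent with the paper, which states the bound on $f'$ only for $v\in[0,v^*)\cup(v^*,1]$.
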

The proof of Lemma \ref{lemma_g_property} is trivial.

Notice that $G({v})$ is not necessarily a revenue curve, since $\frac{G({v})}{{v}}$ is not necessarily decreasing (and thus not a ``survival function''). However, we can construct a revenue curve $D({v}): [0,1]\rightarrow [0,1]$ via an affine transformation:
\begin{equation}
D({v})=\left\{
\begin{array}{lcl}
{v} &   & {v}\in[0,b]\\
b+(1-b)G(\frac{{v}-b}{1-b}) &   &{v}\in(b,1].
\end{array}
\right.
\label{def_eqn_D}
\end{equation}

Here $b=\frac{6C_f+1}{2}\in(0,1)$, and therefore $C_f<\frac16$. An illustration of the transformation from $f({v})$ (the upper figure) to $D({v})$ (the lower figure) is shown in Figure 1. The monotonicity in each interval is not changed, while the rate of $\frac{\E[r(v)]}{v}$ is non-increasing after these transformations.
As is mentioned above, the homothetic transformation with center $(1,1)$ ensures a non-increasing property of $\frac{D({v})}{{v}}$. Here we denote $d({v}):=\frac{D({v})}{{v}}$. To show that $D({v})$ is a revenue curve, we expand the definition of $d({v})$ to $\R$ as follows:
\begin{equation}
d({v})=\left\{
\begin{array}{lcl}
1 &   & {v}\in(-\infty,0]\\
\frac{D({v})}{{v}}&   & {v}\in(0,1)\\
0 &   &{v}\in[1,+\infty).
\end{array}
\right.
\label{def_eqn_d}
\end{equation}
Now we claim that there exists a random variable $Y>0$ such that $d({v})=\P[Y\geq {v}]$. To show this, it is sufficient to prove the following lemma:
\begin{lemma}
	For $d({v})$ defined in Equation \ref{def_eqn_d}, we have the following properties:
	\begin{enumerate}
		\item $d({v})$ is non-increasing on $\R$.
		\item $d({v})$ is continuous at ${v}=0$, i.e. $d(0)=\lim_{{v}\rightarrow0^{+}}d({v})=1$.
		\item $d({v})\geq 0, {v}\in[0,1]$.
	\end{enumerate}
\end{lemma}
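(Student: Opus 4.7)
The plan is to handle each of the three properties by exploiting the piecewise definition of $d(v)$ and the earlier properties of $G$. Properties 2 and 3 will be essentially immediate; the main content is property 1, where the calibration $b=(6C_f+1)/2$ plays an essential role.

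For property 2, on $(0,b]$ the definition $D(v)=v$ gives $d(v)=v/v=1$, which matches the boundary value $d(0)=1$. For property 3, on $[0,b]$ we have $d\equiv 1\ge 0$; on $(b,1)$, combining $D(v)=b+(1-b)G\bigl((v-b)/(1-b)\bigr)$ with $G\ge 0$ (lemma on $G$, property 1) yields $D(v)\ge b>0$, so $d(v)>0$; at $v=1$ the extended definition gives $d(1)=0$.

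The core of the argument is property 1. On $(-\infty,b]$ we have $d\equiv 1$ and on $[1,\infty)$ we have $d\equiv 0$, so the substantive work is to show $d$ is non-increasing on $(b,1)$ and to verify the behaviour at the two transitions $v=b$ and $v=1$. Setting $u=(v-b)/(1-b)$, the chain rule gives $D'(v)=G'(u)$, and differentiating $d(v)=D(v)/v$ yields
\[
 d'(v)=\frac{G'(u)\,v-b-(1-b)\,G(u)}{v^{2}}.
\]
Showing $d'(v)\le 0$ reduces to the inequality $G'(u)\,v\le b+(1-b)\,G(u)$. Using $|G'(u)|\le 6C_f$ (lemma on $G$, property 3) together with $v<1$, the left side is at most $6C_f$, while $G\ge 0$ ensures the right side is at least $b$. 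The needed estimate $6C_f\le b$ follows directly from $b=(6C_f+1)/2$, i.e.\ from the constraint $C_f\le 1/6$ that the text already imposes.

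For the transitions, one checks $G(0)=G(1)=0$ directly from the bump construction, since every rescaled bump $B_{[a_k,b_k]}$ vanishes at $0$ and at $1$ (because $[a_k,b_k]\subseteq[0,1]$ with $a_k\ge 0$ for $k\ge 1$ and $B(0)=B(1)=0$ by definition). Hence as $v\to b^{+}$, $d(v)\to [b+(1-b)G(0)]/b=1$, matching the constant value on the left; as $v\to 1^{-}$, $d(v)\to b+(1-b)G(1)=b$, whereas the extended value is $d(1)=0$, which is a downward jump and therefore compatible with monotonicity. I expect the only delicate step to be the calibration $6C_f\le b$: the choice of $b$ is precisely what converts the Lipschitz bound on $G$ into monotonicity of $d$, and without it $d$ could fail to be non-increasing near $v=b$ where the denominator $v^{2}$ in $d'(v)$ is smallest.
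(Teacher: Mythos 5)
Your proposal is correct and follows essentially the same route as the paper: differentiate $d(v)=D(v)/v$ on $(b,1)$, bound the numerator using $|G'|\leq 6C_f$, $G\geq 0$, and the calibration $b=\frac{6C_f+1}{2}$ with $C_f<\frac16$ (your inequality $G'(u)v\leq 6C_f\leq b\leq b+(1-b)G(u)$ is the same estimate as the paper's $b-vG'(u)\geq\frac{1-6C_f}{2}>0$), then check continuity at $v=b$ and the downward jump at $v=1$.
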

\begin{proof}
	According to Equation \ref{def_eqn_D} and \ref{def_eqn_d}, we have:
	\begin{equation*}
	d({v})=\left\{
	\begin{array}{lcl}
	1 &   & {v}\in(-\infty,b]\\
	\frac b{v} + \frac{1-b}{{v}}\cdot{G}(\frac{{v}-b}{1-b})&   & {v}\in(b,1)\\
	0 &   &{v}\in[1,+\infty).
	\end{array}
	\right.
	\end{equation*}
	Therefore, we take the derivatives of $d({v})$ and get:
	\begin{equation*}
	\frac{\partial d({v})}{{v}}=\left\{
	\begin{array}{lcl}
	o &   & {v}\in(-\infty,0)\cup(0,b)\\
	-\frac{b-{v}\cdot G'(\frac{{v}-b}{1-b})+(1-b){G}(\frac{{v}-b}{1-b})}{{v}^2}&   & {v}\in(b,1)\\
	0 &   &{v}\in(1,+\infty).
	\end{array}
	\right.
	\end{equation*}
	Notice that
	\begin{equation*}
	\begin{aligned}
	b-{v}\cdot {G'}(\frac{{v}-b}{1-b})&\geq b - |{v}|\cdot|{G'}(\frac{{v}-b}{1-b})|\\
	&\geq b - 1\cdot 6C_f\\
	&=\frac{6C_f+1}{2}-6C_f\\
	&=\frac{1-6C_f}{2}\\
	&>0.
	\end{aligned}
	\end{equation*}
	The last inequality comes from the fact that $C_f<\frac16$. Therefore, $d'({v})$ is non-positive in $(-\infty,0)\cup(0,b)\cup(b,1)\cup(1,+\infty)$. Also, $d({v})$ is continuous at ${v}=0, {v}=b$ and $\lim_{{v}\rightarrow 1^{-}}=b>0=\lim_{{v}\rightarrow1^{+}}$, we know that $d({v})$ is always non-increasing on $\R$.
\end{proof}

Notice that there is a bijection between each $\{[a_k, b_k]\}_{k=0}^{\infty}$ series and each $f({v})$, and correspondingly each $G({v})$, $D({v})$ and $d({v})$. Still, a bijection lies between each $d({v})$ and the distribution $\P[Y\geq {v}]$ of the customers' valuation. Therefore, we will take $d({v})$ to represent this distribution. 

With all preparations done above, we are now able to prove Theorem \ref{theorem_lower_bound_main}. Specifically, we will proof the theorem on an infinite series of $n_1, n_2, \ldots$, where $n_k=\lceil\frac1k(\frac{w_{k-1}}{w_k^3})\rceil$. Consider the possible $Q_k=\frac{w_{k-1}}{3w_{k}}$ choices of $[a_k, b_k]$, and denote these intervals as $I_j, j=1,2,\ldots, Q_k$. If $[a_k, b_k]=I_j$, then we denote the corresponding $f({v}), G({v}), D({v}), d({v})$ functions as $f_j({v}), G_j({v}), D_j({v})$ and $ d_j({v})$ sequentially. Meanwhile, if we \emph{do not} make any choice of $[a_k, b_k]$, and then we just have a finite series of intervals  $[0,1]=[a_0,b_0]\supset[a_1,b_1]\supset[a_2,b_2]\supset\ldots\supset[a_{k-1}, b_{k-1}]$, and then we can define a $f_0({v})=C_f\cdot\sum_{j=0}^{k-1}w_j\cdot B_{[a_j, b_j]}({v})$, and can also define corresponding $G_0({v}), D_0({v}), d_0({v})$ based on $f_0({v})$.

Now, consider the pricing feedbacks in total $n$ rounds (where we denote $n_k$ as $n$ for simplicity). Define a \emph{feedback vector} $\vr_n\in\{0,1\}^n$, denoting the outcome of a deterministic policy interacting with the revenue curve. We claim that for $t=1,2,\ldots, n$, a vector $\vr_t$ is sufficient for any deterministic policy to generate a price ${v}_{t+1}$, because $\vr_i$ is a prefix of $\vr_j$ when $i\leq j$. For any policy $\pi$, denote the probability of $\vr_n$'s occurence as $\P_j(\vr_n)$ under the distribution $d_j$, or $\P_0(\vr_n)$ under the distribution $d_0$. Denote the series of prices that $\pi$ has generated as $\{{v}_t\}, t=1,2,\ldots, n$, and we may assume ${v}_t\geq b$ without losing generality (as $0\leq {v}<b$ is always suboptimal). Then, for any function $h: \{0,1\}^n\rightarrow[0,M]$, we have:
\begin{equation*}
\begin{aligned}
&\E_{\P_j}[h(\vr_n)]-\E_{\P_0}[h(\vr_n)]\\
=&\sum_{\vr_n}h(\vr_n)\cdot(\P_j[\vr_n]-\P_0[\vr_n])\\
\leq&\sum_{\vr_n:\P_{j}[\vr_n]\geq\P_0[\vr_n]}h(\vr_n)(\P_{j}[\vr_n]-\P_{0}[\vr_n])\\
\leq&M\cdot\sum_{\vr_n:\P_{j}[\vr_n]\geq\P_0[\vr_n]}h(\vr_n)(\P_{j}[\vr_n]-\P_0[\vr_n])\\
=&\frac{M}{2}\|\P_{j}-\P_0\|_1\\
\leq&\frac M 2\sqrt{2\ln{2}\cdot KL(\P_0||\P_j)}.\\
\end{aligned}
\end{equation*}
The last line comes from Lemma 11.6.1 in Cover \& Thomas, Elements of Information Theory, where $KL$ stands for the KL-divergence. Since
\begin{equation*}
\begin{aligned}
&\\KL(\P_0(\vr_n)||\P_{j}(\vr_n))
=&\sum_{t=1}^{n}KL(\P_0[r_t|\vr_{t-1}]||\P_j[r_t|\vr_{t-1}])\\
=&\sum_{t=1}^{t}\P_0(\frac{{v}_t-b}{1-b}\notin I_j)\cdot 0 + \P_0(\frac{{v}_t-b}{1-b}\in I_j)\cdot KL(\frac{D_{0}({v}_t)}{{v}_t}||\frac{D_j({v}_t)}{{v}_t}).
\end{aligned}
\end{equation*}
The first equality comes from the chain rule of decomposing a KL-divergence. The second equality is because$r_t$ is a Bernoulli random variable that satisfies $Ber(\frac{D_0({v}_t)}{{v}_t})$ under $\P_0$, or $Ber(\frac{D_j({v}_t)}{{v}_t})$ under $\P_j$. Denote ${\mu}_t:=\frac{{v}_t-b}{1-b}$ for simplicity. Notice that if ${v}_t\in I_j$, then we have:
\begin{equation*}
\begin{aligned}
\frac{D_j({v}_t)}{{v}_t}-\frac{D_0({v}_t)}{{v}_t}&=\frac{(1-b)\cdot G_j({\mu}_t)}{{v}_t}-\frac{(1-b)\cdot{G_0({\mu}_t)}}{{v}_t}\\
&=\frac{1-b}{{v}_t}(G_j({\mu}_t)-G_0({\mu}_t))\\
&=\frac{1-b}{{v}_t}\big(\frac{1}{f_0({\mu}_t)+1}-\frac{1}{f_j({\mu}_t)+1}\big)\\
&=\frac{1-b}{{v}_t}\frac{f_j({\mu}_t)-f_0({\mu}_t)}{(f_0({\mu}_t)+1)(f_j({\mu}_t)+1)}\\
&=\frac{1-b}{{v}_t}\frac{C_f\sum_{i=k}^{\infty}w_i\cdot B_{[a_i, b_i]}({\mu}_t)}{(f_0({\mu}_t)+1)(f_j({\mu}_t)+1)}\\
&\leq\frac{1-b}b\cdot\frac{C_f\cdot2w_k}{1\times 1}\\
&\leq{1}\cdot2C_fw_k\\
&\leq\frac{w_k}{3}
\end{aligned}
\end{equation*}
Here the third last inequality comes from ${v}_t\geq b, f_0({v})\geq 0, f_j({v})\geq 0$, and the fact that
\begin{equation*}
\sum_{i=k}^{\infty}w_iB_{[a_i, b_i]}({\mu}_t)\leq\sum_{i=k}^{\infty}3^{-i!}\cdot{1}\leq3^{-k!}\sum_{i=0}^{\infty}3^{-i}\leq\frac 23\cdot3^{-k!}<2w_k.
\end{equation*}
The second last inequality comes from $b=\frac{6C_f+1}2\geq\frac12$. The lastest inequality comes from the fact that $6C_f<1$.

Now we propose a lemma:
\begin{lemma}
	For Bernoulli distributions $Ber(p)$ and $Ber(p+\epsilon)$ with $\frac{1}{2}\leq p\leq p+\epsilon\leq \frac12 +C$, we have
	\begin{equation*}
	KL(p||p+\epsilon)\leq\frac{1}{\ln2}\frac{4}{1-4C^2} \epsilon^2.
	\end{equation*}
	
	\label{lemma_kl_bernoulli}
\end{lemma}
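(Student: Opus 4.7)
The plan is to establish the classical ``reverse Pinsker''-type bound $KL_e(p||q) \le (p-q)^2/(q(1-q))$ for Bernoulli distributions (where $KL_e$ denotes the KL-divergence in nats), then substitute $q = p+\epsilon$ and take the worst case of $q(1-q)$ over the allowed interval $[1/2,\ 1/2+C]$. The factor $1/\ln 2$ in the statement will then appear simply from converting nats to bits, since the paper uses $\log_2$-based KL throughout (as in the application of Lemma 11.6.1 of Cover \& Thomas immediately above the lemma).

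For the first step I would expand $KL_e(p||q) = p\ln(p/q) + (1-p)\ln((1-p)/(1-q))$ and apply the elementary inequality $\ln x \le x-1$ separately to each logarithm. This gives $p\ln(p/q) \le p(p-q)/q$ and $(1-p)\ln((1-p)/(1-q)) \le (1-p)(q-p)/(1-q)$. Summing, factoring out $(p-q)$, and using the clean identity $p(1-q) - q(1-p) = p - q$, the sum collapses exactly to $(p-q)^2 /(q(1-q))$.

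For the second step I would minimize $q(1-q)$ over $q \in [1/2,\ 1/2+C]$. Since $q \mapsto q(1-q)$ is concave with maximum at $q = 1/2$, its minimum on that interval is attained at the right endpoint, so $q(1-q) \ge (1/2+C)(1/2-C) = 1/4 - C^2$. Substituting $q = p+\epsilon$ into the reverse-Pinsker bound then yields $KL_e(p||p+\epsilon) \le 4\epsilon^2/(1-4C^2)$, and dividing by $\ln 2$ converts this into the base-2 bound claimed in the lemma.

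There is no substantive obstacle; the argument is a short chain of elementary steps. The only two places requiring attention are (i) noticing the cancellation $p(1-q)-q(1-p)=p-q$, which is what produces the $(p-q)^2$ numerator, and (ii) being careful about the base of the logarithm so that the $1/\ln 2$ factor is accounted for precisely once. The hypothesis $1/2 \le p \le p+\epsilon \le 1/2+C$ is used only to pin down the minimum of $q(1-q)$; the assumption $p \ge 1/2$ ensures the endpoint $q = 1/2+C$ (rather than $q = 1/2$) gives the controlling bound.
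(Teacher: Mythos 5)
Your proposal is correct and takes essentially the same route as the paper: both expand the Bernoulli KL and apply the elementary logarithm bound $\ln x\le x-1$ (equivalently $\tfrac{v}{1+v}\le\ln(1+v)\le v$) to each term, which collapses to the same intermediate quantity $\tfrac{1}{\ln 2}\cdot\tfrac{\epsilon^2}{(p+\epsilon)(1-p-\epsilon)}$, i.e.\ your reverse-Pinsker bound with $q=p+\epsilon$. The final step is also identical: bound $(p+\epsilon)(1-p-\epsilon)$ below by its value at the right endpoint, $(\tfrac12+C)(\tfrac12-C)=\tfrac14-C^2$, giving the stated constant.
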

\begin{proof}
	\begin{equation*}
	\begin{aligned}
	KL(p||p+\epsilon)&=p\log(\frac{p}{p+\epsilon})+(1-p)log(\frac{1-p}{1-p-\epsilon})\\
	&=\frac{1}{\ln2}\cdot(p(-\ln(1+\frac{\epsilon}{p}))+(1-p)\ln(1+\frac{\epsilon}{1-p-\epsilon}))\\
	&\leq\frac{1}{\ln2}\cdot( p(-\frac{\epsilon}{p+\epsilon})+(1-p)\frac{\epsilon}{1-p-\epsilon})\\
	&=\frac{1}{\ln2}\cdot\frac{\epsilon^2}{(p+\epsilon)(1-p-\epsilon)}\\
	&\leq\frac{1}{\ln2}\cdot\frac{\epsilon^2}{(\frac{1}{2}+C)(\frac{1}{2}-C)}\\
	&\leq\frac{1}{\ln2}\cdot\frac{1}{\frac 14 - C^2}\epsilon^2.
	\end{aligned}
	\end{equation*}
	Here the third line comes from the fact that $\frac{{v}}{1+{v}}\leq \ln{{v}}\leq {v}$.
\end{proof}
Let us come back to the proof of the theorem. Since $\frac{D_0({v}_t)}{{v}_t}\geq b\geq \frac{1}{2}$, and the fact that 
\begin{equation*}
\begin{aligned}
\frac{D_j({v}_t)}{{v}_t}&\leq\frac{D_j(b+\frac{1-b}3)}{b+\frac{1-b}3}\\
&=3\cdot\frac{b+(1-b)(1-G_j(\frac 13))}{1+2b}\\
&=3\frac{b+(1-b)\frac{f_j(\frac13)}{f_j(\frac13)+1}}{1+2b}\\
&=3\frac{b+(1-b)\frac{C_f}{C_f+1}}{1+2b}.
\end{aligned}
\end{equation*}
The first inequality is because $\frac{D({v})}{{v}}$ is non-increasing and the fact that ${v}_t\geq b + \frac{1-b}3$ if ${v}_t\in[a_k, b_k], k\geq 1$. The last equality comes from the fact that $f_j(\frac13)=C_f$. Now we specify the constants: let $C_f=\frac{1}{60}, b=\frac{6C_f+1}2=\frac{11}{20}$. Plug in these constant values and we get:
\begin{equation*}
\frac{D_j({v}_t)}{{v}_t}\leq\frac{340}{427}<\frac56.
\end{equation*}

According to Lemma \ref{lemma_kl_bernoulli}, we have:
\begin{equation*}
KL(\frac{D_{0}({v}_t)}{{v}_t}||\frac{D_j({v}_t)}{{v}_t})\leq\frac{1}{\ln2}\cdot\frac{4}{1-4\cdot(\frac56-\frac12)^2}\cdot(\frac{w_k}3)^2=\frac{1}{\ln2}\cdot\frac{36}5\cdot\frac{w_k^2}9=\frac{1}{\ln2}\cdot\frac{4w_k^2}5.
\end{equation*}.
Recall that
\begin{equation*}
\begin{aligned}
&KL(\P_0(\vr_n)||\P_{j}(\vr_n))\\
=&\sum_{t=1}^{t}\P_0(\frac{{v}_t-b}{1-b}\in I_j)\cdot KL(\frac{D_{0}({v}_t)}{{v}_t}||\frac{D_j({v}_t)}{{v}_t})\\
\leq&\frac{1}{\ln2}\cdot\frac45w_k^2\cdot\sum_{t=1}^{n}\P_0[{\mu}_t\in I_j].
\end{aligned}
\end{equation*}
Therefore, we have:
\begin{equation*}
\begin{aligned}
&\E_{\P_j}[h(\vr_n)]-\E_{\P_0}[h(\vr_n)]\\
\leq&\frac{M}{2}\sqrt{2\ln2\cdot\frac{1}{\ln2}\cdot\frac45w_k^2\cdot\sum_{t=1}^{n}\P_0[{\mu}_t\in I_j]}\\
\leq&\frac{4M\cdot w_k}{5}\cdot \sqrt{\sum_{t=1}^{n}\P_0[{\mu}_t\in I_j]}.
\end{aligned}
\end{equation*}
Now , let $h(\vr_n)$ be $N_j=|\{t|{\mu}_t\in I_j, t=1,2,\ldots, n_k\}|$, and we know that $M=n_k$. Since $n_k=\lceil\frac1k\frac{w_{k-1}}{w_k^3}\rceil$, we conduct the pricing for $n_k$ times and have:
\begin{equation*}
\begin{aligned}
\E_{\P_j}[N_j]-\E_{\P_0}[N_j]\leq\frac{4M\cdot w_k}{5}\cdot \sqrt{\sum_{t=1}^{n}\P_0[{\mu}_t\in I_j]}=\frac{4M\cdot w_k}{5}\cdot \sqrt{\E_{\P_0}[N_j]}.
\end{aligned}
\end{equation*}
Sum over $j=1, 2,\ldots, Q_k$ of the inequality above, and we take an average to get:
\begin{equation}
\begin{aligned}
\frac{1}{Q_k}\cdot\sum_{j=1}^{Q_k}\E_{\P_j}[N_j]&\leq\frac{1}{Q_k}\sum_{j=1}^{Q_k}\E_{\P_0}[N_j]+\frac{1}{Q_k}\frac45 n_k\cdot w_k\sum_{j=1}^{Q_k}\sqrt{\E_{\P_0}[N]}\\
&=\frac1{Q_k}\cdot{n_k} +\frac1{Q_k}\frac45n_k\cdot w_k\sum_{j=1}^{Q_k}\sqrt{\E_{\P_0}[N_j]}\\
&\leq\frac{n_k}{Q_k}+\frac45\frac{n_k}{Q_k}\cdot w_k\cdot\sqrt{Q_k\cdot\sum_{j=1}^{Q_k}\E_{\P_0}[N_j]}\\
&=\frac{n_k}{Q_k}+\frac45\frac{n_k}{Q_k}\cdot w_k\cdot\sqrt{Q_kn_k}\\
&\leq\frac3k\cdot\frac45\frac3k\frac1{w_k^2}\sqrt{\frac3{k^2}\cdot\frac{w_{k-1}^2}{w_k^4}}\\
&=\frac3k\frac1{w_k^2}+\frac{4\sqrt3}{5\sqrt{k}}\frac1k\frac{w_{k-1}}{w_k^3}\\
&\leq0.9\cdot n_k, \text{ for $k\geq 3$.}
\end{aligned}
\label{eqn_const_frac}
\end{equation} 
In Equation \ref{eqn_const_frac}, the first line comes from the summation; the second (and the fourth) line is because $\sum_{j=1}^{Q_k}\E_{\P_0}[N_j]=\E_{\P_0}[\sum_{j=1}^{Q_k}N_j]=n_k$; the third line is an application of Cauchy-Schwartz's Inequality; the fifth line is derived by plugging in $Q_k=\frac{w_{k-1}}{3w_k}, n_k\leq\frac1k\cdot\frac{w_{k-1}}{w_k^3}, w_k=3^{-k!}$; the last line is just calculations. Therefore, under distribution $d_j$, the policy $\pi$ is expected to choose an ${v}_t\notin I_j$ for at least $0.1n_k$ times, which will bring a regret $0.1n_k\cdot C_j\cdot w_k=\frac1{600}n_k\cdot w_k=\frac1k\cdot w_k^{\frac1k-2}$. Since $n_k=\frac1k\cdot w_k^{\frac1k-3}$, we know that $Regret=\Omega((n_k)^{\frac23-\frac{1}{3k}})$ up to logarithmic factors. Therefore, we claim that for any $\delta>0$, no policy can achieve $o(n_k^{\frac23-\delta})$ for sufficiently large $k$.


This ends the proof of Theorem \ref{theorem_lower_bound_main}.


\section{More Discussions}
In this work, we have developed two ``linear'' approaches toward the agnostic dynamic pricing problem. There are, however, still some issues that we have a handful of insights instead of rigorous proof or empirical evidence. Here we would like to present these ideas that might serve as heuristics for further research.

\subsection{Differences between LP and LV}
\label{appendix_lp_vs_lv}
As we stated in Section \ref{sec_intro}, LP models our strategy while LV modes the nature. Also, a good (no-regret) LP algorithm approaches the best linear policy in total while a good LV algorithm approaches the global optimal price at each round. When we adopt a LP problem model, we indeed have very little information about the market valuation other than obvious features of the product to sell. In this situation, a linear pricing policy is tractable and transparent to the customers, but it is not guaranteed to present or approach the best price. When we adopt a LV problem model, it is assumed that we have already known all features of the selling session (not limited to the product itself), and the fluctuation caused by the market is independent to the product. Therefore, we may learn from the feature-pricing-feedback data over the time and estimate the noise distribution, which would help approaching the best price combining with a greedy policy. Here is a concrete example regarding vehicle owners, dealers and buyers that illustrates the difference between LP and LV:

In Session 1, suppose we are the owner and would like to sell our used car to a buyer/dealer. A 3rd-party evaluator will evaluate your car based on a few (but not all) factors, e.g., mileage, duration, condition and accident records, and then subtract a certain amount from the selling price of an identical new car. This amount is usually linearly or near-linearly dependent on these factors listed above. Remember that this selling price is proposed by we owners. In other words, we are the seller in this session, and the buyer/dealer would respond by accepting or declining the price we propose. Here we adopt a linear pricing policy because we do not have full information of the selling session, and therefore customers' valuation model is indeed unclear to us.

In Session 2, suppose we are the dealer and would like to sell a used car to a buyer. Car dealers usually have sufficient information on the vehicle and the market supply-demand relationship. At least, we know clearly about which features are related to customers' valuations. Therefore, it is reasonable for usr to assume a parametric noisy valuation model (possibly a LV model) on their customers, and we would optimize these parameters based on historical selling records. With the model being well-learned, we may approach the global optimal price every time. That we directly make assumptions on customers' valuation model is reasonable since we dealers have sufficient information, but this could still be risky if the features we can observe are limited.

\subsection{Applying LP Algorithm to LV Problem Model}
LP and LV's are two distinctly different problems: the optimal prices in an LV problem is not necessarily linear w.r.t. $x_t$: when $x_t=0$ with a zero-mean noise, the expected reward of a price that is slightly larger than 0 would be positive while the expected reward of 0 price is exactly 0. Therefore, we believe that an optimal linear policy would suffer an $\Omega(T)$ regret in some LV settings even with known noise distributions. However, if the noise distribution is parametric by some parameter $\eta\in\R^{k}$, then we might have a ``pseudo-linear'' policy $\tilde{\eta}:[\theta_1, \theta_2, \ldots, \theta_d, \eta_1, \eta_2, \ldots, \eta_k]^{\top}$ that takes $\tilde{x}:=[x_1, x_2, \ldots, x_d, \phi_1(u), \phi_2(u), \ldots, \phi_k(u)]^{\top} $ as input, and outputs $v = \tilde{\eta}^{\top}\tilde{x}$. A similar linearization idea in non-feature pricing has been adopted in \citet{wang2021multimodal} and achieves optimal regrets. However, it is still unknown whether their methods can be applied to this feature-based LV problem. The key to this approach is to figure out a (nearly-)linear action-to-reward mapping, but this seems really hard in this setting. Again, in an LV problem it is the valuation instead of the optimal price that is linear. 
\subsection{The Hardness of Pricing versus Bandits}
The generic feature-based dynamic pricing problem can be reduced to a contextual bandit problem with continuum action and infinite policy spaces, despite some literature that assumes a different acceptance/declination reward scope (see \citet{bartok2014partial}). Therefore, the gap between a dynamic pricing problem and an ordinary (discrete-action and finite-policy) contextual bandit problem can be observed from three perspectives. Firstly, the pricing feedback contains more information than a bandit feedback: if $v_t$ is accepted, then any $v\leq v_t$ would have been accepted if it were proposed. We call this a ``half-space information''. Secondly, a discrete action space might not contain the optimal or any near-optimal price that matches the minimax regret: the revenue curve can vary drastically with respect to the price (e.g., consider a noise whose pdf is a rescaled Weierstrass function). Thirdly, a finite policy space might not contain the global optimal or any near-optimal policy, either. This is possible even for a parametric policy space where the parameter space is infinite. Therefore, we cannot directly adopt the regret bounds of contextual bandits onto feature-based dynamic pricing problems unless there exists a rigorous reduction.

However, we notice that the three perspectives above are pointing at different directions: the ``half-space information'' makes pricing easier than bandits, while the other two discretization issues makes it harder. In fact, we might partially offset the ``continuum action'' issue with the ``half-space information'' just like what we did in this paper: the revenue curve is actually ``half Lipschitz'' that $g(v_1,u,F)-g(v_2, u, F)\leq v_1-v_2$ if $v_1\geq v_2$. This helps our algorithms get rid of the Lipschitz assumption. However, this is not rich enough to substantially reduce the regret as we still use bandit algorithms to achieve a minimax rate in an LP problem, where the lower bound holds even for Lipschitz revenue curve. Therefore, a very important question occurs to us: what else could a pricing feedback provide other than the ``half Lipschitz''? Technically speaking, does a pricing feedback contain high-order information of the revenue curve? Besides, remember that we still do not have a unified approach toward a finite near-optimal policy set. In this work, we discretize the noise distribution by $\frac1{\gamma}$ grids, which indeed increases the regret bound. For more sophisticated feature-valuation mapping (e.g., a non-linear valuation model) that is hard to parameterize, maybe it is not suitable to just apply naive discretization methods.

As a result, pricing problem seems at least as hard as bandits, and it is still unclear whether or not we could completely solve the feature-based dynamic pricing via contextual bandit methods (even though the major contributions toward single-product dynamic pricing are from multi-armed-bandits-related approaches). 

\subsection{Social Impacts}
In this work, we mainly focus on an online-fashion pricing problem where only one product is sold to one customer at each round (time spot). Therefore, it is not likely to commit a pricing discrimination according to its rigorous definition (since the price fluctuation over time should not be treated as discrimination). However, there exist chances that our algorithm could be misused. Notice that each item is characterized by a feature vector $x_t$, which might be used to capture more information, e.g., customers' behaviors. On the one hand, it is indeed a price discrimination if we propose differently-generated prices to customers with different personal features even at different time point as long as the market has not changed substantially. On the other hand, this would lead to a potential leakage of personal privacy. It is usually forbidden to collect and use personal information for commercial use, but the sellers would at least know what the customers have bought and how much they have paid. Even though the feature $x_t$ can be encoded with cryptographic techniques such that it is still suitable for learning (e.g., a ``fully-homomorphic encryption'', or FHE), at least the proposed prices are informative and might reveal the customer's behaviors. Indeed, auctions are a method to avoid any pricing discrimination, but it is not practical in most of the situations happening in our daily life.

\end{document}